\newcommand{\bm}[1]{\mathbf{#1}}
\newcommand{\w}[1]{\mathcal{#1}}
\newcommand\m[1]{\begin{pmatrix}#1\end{pmatrix}} 
\newcommand{\bs}[1]{\boldsymbol{#1}}
\newcommand{\pr}[1]{\text{Pr}\left\{ {#1}\right\}}
\newcommand{\T}{\intercal}
\newtheorem{lemma}{Lemma}
\newtheorem{proposition}{Proposition}
\theoremstyle{plain}
\theoremstyle{definition}
\theoremstyle{remark}
\icmltitlerunning{Exploiting Concavity Information in Contextual Bandit Optimization}
\begin{document}

\twocolumn[
\icmltitle{Exploiting Concavity Information in Contextual Bandit Optimization}



\icmlsetsymbol{equal}{*}

\begin{icmlauthorlist}
\icmlauthor{Kevin Li}{yyy}
\icmlauthor{Eric Laber}{yyy}
\end{icmlauthorlist}

\icmlaffiliation{yyy}{Department of Statistical Science, Duke University, North Carolina, United States}

\icmlcorrespondingauthor{Kevin Li}{Kevin.Li@566.edu}

\icmlkeywords{Machine Learning, ICML}

\vskip 0.3in
]




\begin{abstract}
  The contextual bandit framework is widely used to solve sequential 
  optimization problems where the reward of each decision depends on auxiliary 
  context variables. 
  In settings 
  such as medicine, business, and engineering, the decision maker 
  often possesses additional structural information on the generative
  model that can potentially be used to 
  improve the efficiency of bandit algorithms. We consider
  settings in which the mean reward is known to be a concave function of the
  action for each fixed context.  Examples include 
  patient-specific dose-response curves in medicine and expected profit 
  in online advertising auctions. 
  We propose a contextual bandit algorithm 
  that accelerates optimization by conditioning the posterior of a Bayesian 
  Gaussian Process model on this concavity information. We design a novel 
  shape-constrained reward function estimator using a specially chosen 
  regression spline basis and constrained Gaussian Process posterior. Using 
  this model, we propose a UCB algorithm and derive 
  corresponding regret bounds.  We 
  evaluate our algorithm on numerical examples and test functions used to 
  study optimal dosing of Anti-Clotting medication. 
\end{abstract}

\section{Introduction}\label{sec:intro}
The contextual bandit is a widely used model for sequential 
optimization in settings where the reward (utility) 
of any decision depends on auxiliary 
context variables 
\citep[e.g., see][for examples and additional references]{chu2011contextual, bouneffouf2020survey,silva2022multi}.  
Any efficient sequential optimization algorithm  makes use
of problem-specific information to impose structure on the class
of models under consideration. In the case of contextual bandits,
such structure often comes in the form of constraints on
the conditional mean reward given context
and action.   
Popular assumptions include linearity \citep{chu2011contextual}, samples from 
a Gaussian Process \citep{srinivas2009gaussian, krause2011contextual}, 
smoothness of the function defined by bounds on RKHS norm 
\citep{valko2013finite, chowdhury2017kernelized}, or effective dimensionality 
as derived from the tangent kernel of a deep network 
\citep{zhou2020neural, zhang2021neural,  kassraie2022neural}. 

We consider the case where mean 
reward function is concave with respect to the action at all observed contexts. 
Concavity in the mean reward as a function of action given context arises when
there are diminishing returns associated with increased intensity 
of the action.  
One example is dose-optimization in medicine in which both the efficacy and
risk of side-effects (such as toxicity) increase monotonically with
dose, thus an optimal dose must be large enough to be clinically efficacious 
but not so large that the risk of serious adverse effects is unacceptable 
\citep{chen2016personalized, canale2018convex, taleb2023working}. 
Another well-known example occurs in 
bidding and pricing systems in 
commerce, where 
increasing
price increases the per-unit profit but decreases overall sales volume
\citep{ meyer1968consistent, varian1982nonparametric, boehm2022convex}.  
We will 
show that incorporating this concavity information into contextual bandit algorithms improves 
bandit optimization performance.  

The incorporation of structural information and constraints into bandit algorithms has received 
significant attention in the literature. In the  non-contextual bandit setting, the bandit 
convex optimization literature studies zero-order convex optimization problems and has produced 
methods with strong theoretical guarantees \citep{lattimore2024bandit}. Recently, 
\citet{van2024optimal} derived a general framework for structural constraints in the non-
contextual stochastic bandit settings. However, these algorithms do not have straightforward 
adaptation to our contextual bandit problem. 

Independent of bandit optimization, significant research has been done on convexity/concavity 
constrained function estimation \citep{JMLR:v14:hannah13a,  wang2016estimating, 
mazumder2019computational,aubin2020hard}. However, it is challenging to design bandit 
algorithms with provable sub-linear regret bounds from these bespoke models due to their 
complex estimation/modeling methods.

To exploit concavity information, this paper develops  Concave Spline Gaussian Process
 (CSGP)
model and a corresponding upper confidence bound (UCB) algorithm.\footnote{
In Appendix \ref{sup:thompson} we also derive a Thompson sampling algorithm as well as its associated Bayesian regret bounds.
}
By modeling the response 
using a Gaussian Process derived from an appropriately chosen regression spline representation 
\citep{ramsay1988monotone, meyer2008inference}, we reduce the concavity constraint to a 
negativity constraint on a Gaussian Process posterior. We then derive properties 
of truncated Multivariate Gaussian distributions to obtain a UCB algorithm. We provide regret bounds under the Bayesian assumption that the true reward function is a sample from a GP prior.  In simulation experiments, we find that GSGP results in markedly lower regret than a suite of state-of-the-art algorithms across
a variety of test problems.

\section{Background}
\subsection{Concave Contextual Bandit Problem Statement}

We consider a sequential decision making problem consisting of $T$ rounds. 
Our goal is to maximize the expected cumulative reward. In each round, the 
decision maker 
receives a $d$-dimensional context $\bm{x}_t \in \w{X}$ and must choose 
$a_t \in \w{A} \subseteq [0, 1]$. After choosing the action $a_t$ 
the system produces a noisy reward
\begin{align}
\label{eqn:obs_model}
y_t = f(a_t, \bm{x}_t) + \epsilon_t,
\end{align}
where $\epsilon_t \sim_{\mathrm{iid}} N(0, \sigma^2)$. 
Thus, $f(a_t, \bm{x}_t)$ represents the mean reward given action $a_t$ and
context $\bm{x}_t$. We assume there exists $\omega>0$ such 
that for each $\bm{x}\in\mathcal{X}$, the 
function $a \mapsto f(a, \bm{x})$ is defined for all $a \in \mathcal{A}^{\omega} = 
(-\omega, 1-\omega)$, i.e., a $\omega$-enlargement of $[0,1]$.  
In addition, we assume
that this map is twice continuously differentiable at every
$a \in \mathcal{A}^{\delta}$ for each $\bm{x}\in\mathcal{X}$.  
At each time $t$, the decision maker also receives 
concavity information that 
for each $t'\le t$ the second partial with respect to $a$ satisfies 
$\frac{\partial^2 f(a, \bm{x}_{t'})}{\partial a^2} < 0$ for all $a \in [0,1]$, i.e., the map $a \mapsto f(a, \bm{x}_{t'})$ is concave with respect to $a$ 
at the current and all previously observed contexts.
This assumption is natural when the decision maker only wants to condition on concavity 
conditions for contexts $\bm{x}$ that are feasible for the system and the feasible region is 
not known {\em a priori}. 

The quality of the action selected 
at time $t$ can be measured by the regret of not 
choosing the optimal action given the context at time 
$t$: $r_t \triangleq 
\sup_{a \in \w{A}} f(a, \bm{x}_t) - f(a_t, \bm{x}_t)$. After the $T$ rounds, 
the cumulative regret is $R_T \triangleq 
\sum_{t=1}^T r_t$. Our goal is to construct an algorithm that effectively 
makes use of concavity information to improve performance in small samples 
while also 
ensuring sub-linear regret as $T$ grows large.

\subsection{Gaussian Processes}

A Gaussian Process $h(\bm{x})$ is a stochastic process for which its evaluation on any finite 
subset of inputs follows a Multivariate Normal distribution \citep{Rasmussen2006Gaussian}. A 
Gaussian process is fully defined by it's mean function $\mu(\bm{x})$ and covariance function 
$k(\bm{x}, \bm{x}')$. 
A convenient property of Gaussian Processes is that upon observing some 
realizations of a Gaussian Process 
$\bm{h} = \left\lbrace 
h(\bm{x}_1) \dots h(\bm{x}_n)\right\rbrace^\T$, the posterior 
distribution of unobserved samples $\bm{h}^{*} = 
\left\lbrace h(\bm{x}_1^{*}), \dots, h(\bm{x}_m^{*})\right\rbrace^{\T}$ 
follows a 
Multivariate Normal distribution with mean $\bs{\mu}^{*}$ and
covariance $\bm{V}^{*}$ defined by    
\begin{align}
\label{eqn:gp_post}
\bs{\mu}^{*} = \bs{\mu}_{h^{*}} +  \bm{K}_{m, n} \bm{K}_{n, n}^{-1} ( \bm{h} - \bs{\mu}_h), \\
\bm{V}^{*} = \bm{K}_{m, m} - \bm{K}_{m, n} \bm{K}_{n, n}^{-1} \bm{K}_{n, m} \notag,
\end{align}
where:  
$\bm{K}_{n, n}$ is the covariance matrix of the observed realizations $\bm{h}$ such 
that $(i, j)$th entry is $k(\bm{x}_i, \bm{x}_j)$;   
$\bm{K}_{m, m}$ is the covariance matrix of the unobserved realizations; 
$\bm{K}_{m, n}$ is the cross covariance matrix such that the  $(i, j)$th entry is $k(\bm{x}_i^{*}, \bm{x}_j)$; and $\bs{\mu}_{h}, \bs{\mu}_{h^{*}}$ are the mean function evaluated on the observed and unobserved samples respectively.

\subsection{Sub-Gaussian Random Variables}

Our regret bounds rely on the concentration properties of Sub-Gaussian random variables
\citep{rigollet2023high}. We recall that random variable $X$ is Sub-Gaussian if there exists a $\sigma^2 \geq 0$ such that for any $\lambda \in \mathbb{R}$
\begin{align*}
\mathbb{E}\left( \exp\left[ 
\lambda \left\lbrace X - \mathbb{E}\left(X \right)  \right\rbrace \right] \right) \leq \exp\left(\frac{\lambda^2 \sigma^2}{2} \right),
\end{align*}
where 
$\sigma^2$ is known in the literature as a variance proxy.
Applying Chernoff bounds, we can immediately obtain a concentration bound of the Sub-Gaussian random variable about its mean 
\begin{align*}
\Pr\left( |X - \mathbb{E}(X)| > v\right) 
\leq 2\exp\left( -\frac{v^2}{2\sigma^2} \right), 
\end{align*}
This bound will be crucial in analyzing the concentration of the  concavity constrained posterior around the reward function.

\subsection{Concave Spline Regression}
\label{sec:csplines}
For each context, the mean reward is a concave function of the action. Thus,
it is useful to have a mechanism for modeling a one-dimensional 
function, say $g(a)$, that is expressive but also easily constrained 
to be concave.  In this section, we discuss modeling such a function
using regression splines; we omit dependence on the context and show 
how to make the overall mean function smooth across contexts in
Section \ref{subsec:reward_model}. 

For regression splines of polynomial order $k$, with $l$ knots we can derive the $l+k$ 
piece-wise polynomial M-Spline basis functions \citep{ramsay1988monotone} 
$M_j(a) \ ,  j = 1, \dots, l +k$ such that $M_j(a) \geq 0$. 
We model $g''(a)$ using these M-Spline basis functions
\begin{align*}
g''(a) = \sum_{j=1}^{l+k} M_j(a) \beta_j.
\end{align*}
A crucial property of the M-Spline basis is that for $k \leq 2$, $g''(a) \leq 0$ if and only if $\beta_j \leq 0$ for $j = 1, \dots, l + k $ \citep{ramsay1988monotone}.
Integrating each M-Spline basis function $M_j(a)$ twice yields the C-Spline Basis functions  introduced by \citet{meyer2008inference}:
\begin{align*}
C_j(a) = \int_{0}^{a} \int_{0}^u M_j(s) ds du.
\end{align*}
We consider a model for $g(a)$ 
that uses these C-Spline basis functions 
along with the constant and identity functions:
\begin{align}
\label{eqn:splinemodel}
g(a) = \sum_{j=1}^{l+k} C_j(a) \beta_j  + \beta_{l+k+1}a + \beta_{l + k + 2}.
\end{align}

From the properties of $g''(a)$, it follows that $g(a)$ is concave if and only 
if $\beta_j \leq 0$ for $j = 1, \dots, l + k$ for $k \leq 2$. 
The polynomial order of $g(a)$ is $k+2$, allowing us to use cubic splines to model $g(a)$ 
while maintaining the equivalency between coefficient non-positivity and concavity.

\subsection{Benefits of Constrained Estimation}
The advantages of shape-constrained function estimation in terms of prediction accuracy have 
been demonstrated extensively in the 
convex/monotone regression literature \citep{ramsay1988monotone, meyer2008inference, lim2012consistency, JMLR:v14:hannah13a}. 
In this section, we briefly illustrate 
the benefits of the constrained estimation framework in the context of optimization. 
We fit our 
Concave Spline Gaussian Process (CSGP), an unconstrained GP with a Gaussian Kernel, and a Neural Network (NN), on a dataset 
generated from the Warfarin test function studied by \citet{chen2016personalized} with training context/action pairs sampled 
uniformly. Figure \ref{fig:toy:pic} shows example out-of-sample estimates of the mean reward function $f(a, \bm{x})$ 
estimated from $n=100$ training samples from the various methods. The mean reward estimate from the  unconstrained GP exhibits 
far too wiggly behavior which leads to poor estimation of optimal action. The neural network completely fails. Only the CSGP 
provides a reasonable approximation to the general shape of the reward function and its resulting optimum. To formalize this 
observation, we evaluate the accuracy of the optimum estimation\footnote{We define the optimum estimate for the GP models as the 
optimum of the posterior mean of the reward function} for the various methods at sample sizes $n = 25, 50, 150$. For each 
sample size we perform $25$ replications. Figure \ref{fig:toy:rmse} displays the RMSE of the optimum estimation with standard error bars. We see that the constrained optimal estimation significantly outperforms the unconstrained competitors.

\begin{figure*}[t]
\centering

\subfigure[Example Estimated Mean Functions with sample size $n = 100$]{%
{\includegraphics[width = 8cm]{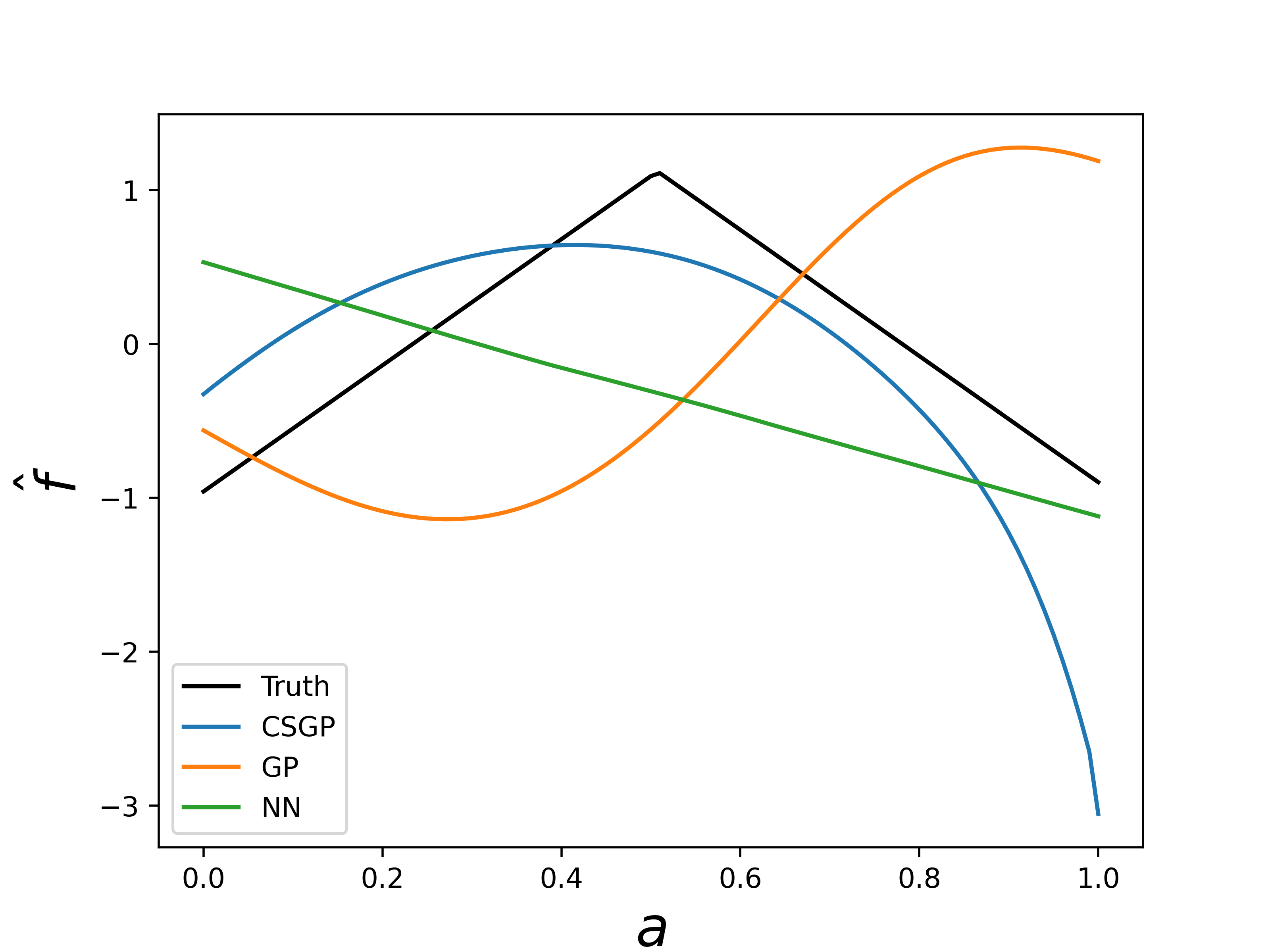}%
\label{fig:toy:pic}%
}}
 \quad
\subfigure[RMSE of optimum estimates with 1 SE error bars]{%
{\includegraphics[width = 8cm]{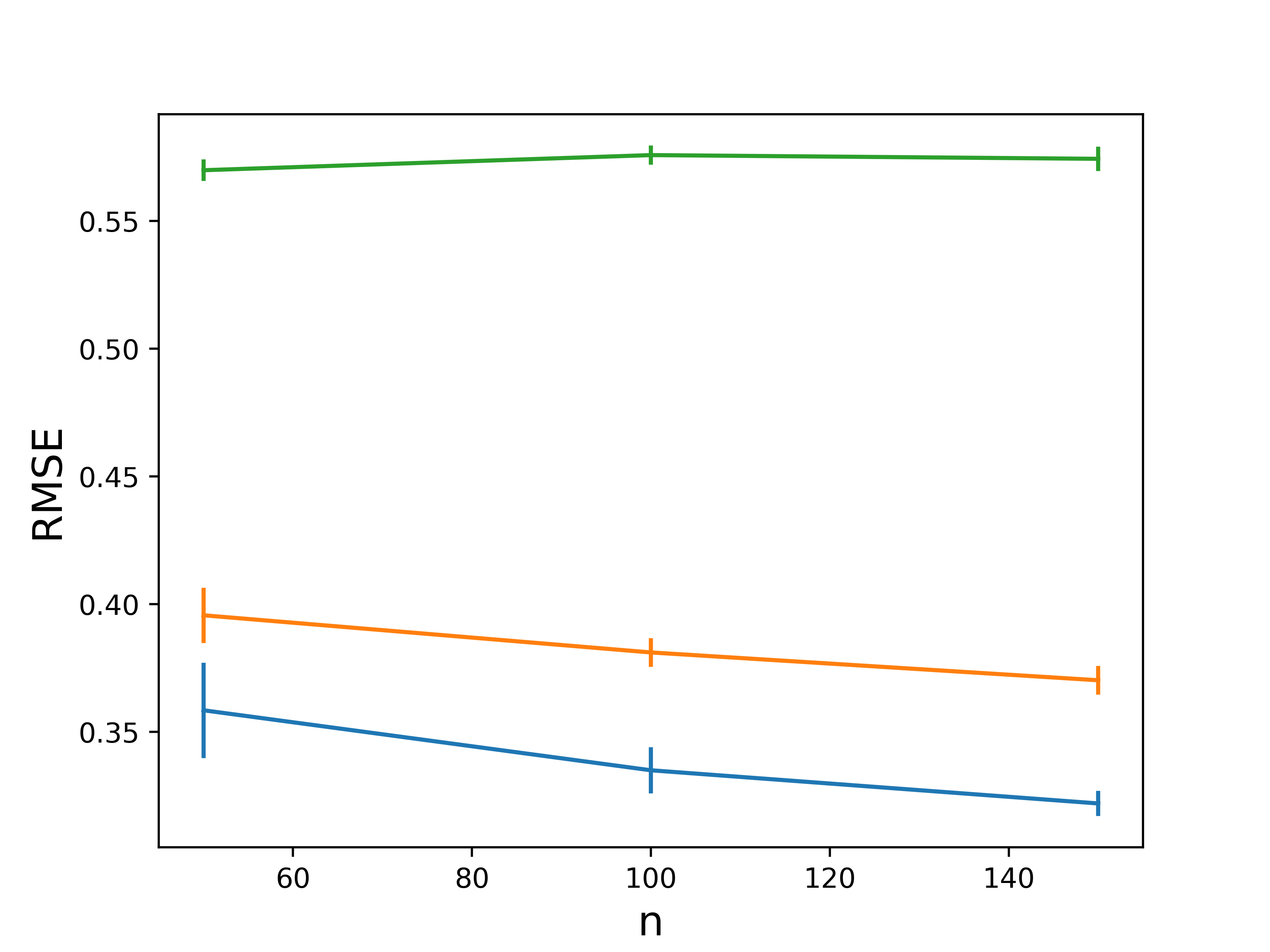}}
\label{fig:toy:rmse}%
}%
\label{fig:toy}

\caption{Illustrations of conditioning on concavity information. }
\end{figure*}

\section{Related Work}

\textbf{GP/Kernelized Bandits} Our work is most closely related to the literature on Gaussian 
Process and Kernelized bandits. \citet{srinivas2009gaussian} first introduced the context-free GP-UCB 
bandits and the core proof techniques of using the maximum information gain $\gamma_t$ to bound 
cumulative regret. They prove regret under both a Bayesian paradigm in which the reward is sampled 
from the GP prior as well as a frequentist paradigm in which the reward function lies in the RKHS of 
the kernel. \citet{krause2011contextual} extend this analysis to the contextual setting and provide 
maximum information gain bounds for kernels derived from additive/product compositions. In a similar 
vein, \citet{valko2013finite} developed a Kernelized UCB algorithm where they bound 
regret using the effective dimension of the kernel. \citet{chowdhury2017kernelized} derived improved bounds for context-free GP/Kernelized UCB bandits and the first frequentist regret 
bounds for GP Thompson sampling.

\textbf{Non-linear bandits} More generally, our work extends the large literature on designing bandit algorithms for non-linear reward functions. \citet{filippi2010parametric,  li2017provably, kveton2020randomized} use generalized linear models to develop UCB algorithms for monotonically increasing reward functions. Recently,  neural networks have become popular for flexibly representing complex reward functions in bandit algorithms. \citet{zhou2020neural} first introduced a neural network based UCB algorithm with regret bounded by the effective dimension of the neural tangent kernel. \citet{zhang2021neural} provide a neural network based Thompson sampling algorithm. \citet{kassraie2022neural} derive an improved UCB algorithm with sub-linear regret by deriving bounds using the maximum information gain.


\textbf{Bandits with Structural Information} 
Our work can be viewed as fitting in the broad category of bandits with 
structural information.  However, convexity in the action
for each context has to the best of our knowledge not yet been 
investigated in the contextual bandit setting. 
The bandit convex optimization 
literature \citep{bubeck2021kernel, lattimore2024bandit} studies 
settings where the reward function 
is convex with respect to all dimensions and the convex function can be 
chosen by an adversary. 
\citet{van2024optimal} develop a general framework for incorporating 
various forms of structural information such as Lipschitz continuity, 
convexity, and dispersion. In the contextual setting, 
\citet{sivakumar2020structured} derive improved Linear-UCB under the 
assumption of a sparse parameter vector. 

\textbf{Shape Constrained Function Estimation} Independently of 
the bandit literature, the problem of shape-constrained 
function estimation 
in the form of monotone/convex regression has been extensively studied;
e.g., see  
\citet{meyer2008inference,groeneboom2014nonparametric, 
lim2012consistency, 
JMLR:v14:hannah13a,
chatterjee2015risk,
wang2016estimating, mazumder2019computational, 
aubin2020hard}, and references therein. 
The literature has shown that incorporating
constraints into estimation and inference 
can significantly outperform 
their unconstrained analogs when 
these assumptions 
hold.

\section{Concave Contextual Bandit}
\subsection{Reward Model}
\label{subsec:reward_model}

Conditional on each context, we assume that the expected 
reward takes the form of a regression spline as described in equation 
\ref{eqn:splinemodel} with the generalization that the spline regression coefficients vary as a smooth functions of $\bm{x}$. Thus, 
we define the CSGP model for the expected reward as:
\begin{align}
\label{eqn:function_model}
f(a, \bm{x}) &= \sum_{j=1}^{J-2} C_j(a) \bs{\beta}_j(\bm{x}) + \bs{\beta}_{J-1}(\bm{x}) a + \bs{\beta}_{J}(\bm{x})\\ 
& \notag := \bs{\phi}(a)^T \bs{\beta}(\bm{x}),  \notag
\end{align}
where each $C_j(a)$ is the C-Spline basis function described in Section \ref{sec:csplines}, $\bs{\beta}(\bm{x}) := 
\left\lbrace 
\bs{\beta}_1(\bm{x}), \dots, \bs{\beta}_J(\bm{x})\right\rbrace^T$ 
and $\bs{\phi}(a) := \left\lbrace 
C_1(a), C_2(a) \dots a, 1\right\rbrace^T$. 
We place  independent Gaussian Processes priors on $\bs{\beta}_j(\bm{x})$ with mean function $\bs{\mu}_j$ and covariance function $k_{j}(\bm{x}, \bm{x}')$. These priors induce a Gaussian Process on $f(a, \bm{x})$ with mean and covariance functions:
\begin{align}
\label{eqn:f_gp}
\mu_f(a, \bm{x}) &= \bs{\phi}(a)^T \bs{\mu}_{\beta}(\bm{x}) \\
k_f\left\lbrace \left(a, \bm{x} \right), \left(a', \bm{x}' \right) \right\rbrace 
&= \sum_{j= 1}^J \bs{\phi}_j(a) k_j(\bm{x}, \bm{x}')  \bs{\phi}_j(a'), \notag
\end{align}
where $\bs{\mu}_{\bs{\beta}}(\bm{x}) = \left\lbrace \mu_1(\bm{x}) \dots \mu_J(\bm{x})\right\rbrace^T$. 
The $\bs{\beta}(\bm{x})$ and $f(a, \bm{x})$ are also jointly distributed as a Gaussian Process with covariance function
\begin{align}
\label{eqn:fcov_beta}
k_{f, \beta_j}\left\lbrace 
(a, \bm{x}), \bm{x}'  \right\rbrace &:= \mbox{cov}\left\lbrace f(a, \bm{x}), \beta_j(\bm{x}') 
\right\rbrace   
\\&= \bs{\phi}_j(a) k_j(\bm{x}, \bm{x}').  \notag
\end{align}

\subsection{Incorporating Concavity Information}
To incorporate concavity in our model, we assume that for each time step 
$t$ we observe both the context $\bm{x}_t$ and the information 
that $f(a, \bm{x}_t)$ is concave in $a$ at all $\bm{x}_1 \dots \bm{x}_t$. 
Under our choice of spline basis (see Section \ref{sec:csplines}) this 
information is exactly equivalent to conditioning on 
$\bs{\beta}_j(\bm{x}_{t'}) \leq 0$ for $j = 1 \dots J -2, t'  \leq t$. 
Extracting predictions from our model therefore boils down to conditioning 
on the non-positivity of $\bs{\beta}$ and the historical noisy reward 
$y_{t'}, t' < t$ in our Bayesian posterior model. An important property of 
our reward and observation models is that after conditioning on such 
information, the posterior for the finite realizations of $\bs{\beta}
(\bm{x})$ is a truncated Gaussian. 
Let $\mathbb{R}_e = \mathbb{R}\cup \left\lbrace \infty \right\rbrace$
denote the (positively) extended reals.  Given 
$\bs{\nu}\in\mathbb{R}_e^p$, $\bs{\omega}\in\mathbb{R}^p$, and
$\bs{\Omega}\in\mathbb{R}^{p\times p}$, we let 
 $N_{\bs{\nu}}(\bs{\omega}, \bs{\Omega})$ denote a multivariate
 normal distribution with mean $\bs{\omega}$ and covariance
 $\bs{\Omega}$ that has been truncated above by $\bs{\nu}$. 
The following result is proved in Appendix \ref{sup:Proofs}. 

\begin{lemma}
\label{lemma:trunc_lemma}
Suppose that we observe $\bm{y}_{t-1} = \left(y_1, \ldots, y_{t-1}\right)^T$. 
Define $\bs{\beta}_{t} := 
\left\lbrace 
\bs{\beta}(\bm{x}_1), \dots, \bs{\beta}(\bm{x}_{t}) 
\right\rbrace^T$ 
to be the concatenated vector of coefficients from every context up to time $t$.  Let $\w{C}_{t'}$ denote the event that  
$\bs{\beta}_j(\bm{x}_{t'}) \leq 0$, for 
$j = 1, \ldots, J-2$. 
Then the posterior $p\left(\bs{\beta}_t| \bm{y}_t, \cap_{t' =1}^{t} \w{C}_{t'}  \right)$ is the upper truncated Multivariate Normal distribution $N_{\bs{\nu}}(\bs{\mu}_t, \bs{\Sigma}_t)$, 
where $(\bs{\mu_t}, \bs{\Sigma}_t)$ are the posterior quantities obtained from the standard GP posterior without truncation, and $\bs{\nu}_j = 0, j \leq J-2$ and $\bs{\nu}_j = \infty$ otherwise. 
\end{lemma}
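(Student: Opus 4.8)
The plan is to reduce the claim to two standard facts and then check that the concavity information enters only through the second. The two facts are: (i) a Gaussian process is closed under conditioning on linear, Gaussian-noised observations, so that $p(\bs{\beta}_t\mid\bm{y})$ is an \emph{untruncated} multivariate normal $N(\bs{\mu}_t,\bs{\Sigma}_t)$ with $(\bs{\mu}_t,\bs{\Sigma}_t)$ given by the posterior formulas of \eqref{eqn:gp_post}; and (ii) conditioning a multivariate normal on a set that is a product of half-lines in some coordinates and the whole line in the others is, by the definition of $N_{\bs{\nu}}(\cdot,\cdot)$ stated just above the lemma, an upper-truncated multivariate normal. The only real content is verifying that the concavity event acts purely as such a truncation of the $\bm{y}$-conditioned Gaussian, changing neither its mean nor its covariance.

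First I would set up the joint prior. Stacking the observation model \eqref{eqn:obs_model} with the representation $f(a,\bm{x})=\bs{\phi}(a)^T\bs{\beta}(\bm{x})$ from \eqref{eqn:function_model}, the data satisfy $\bm{y}=\bs{\Phi}_t\bs{\beta}_t+\bs{\epsilon}$, where $\bs{\Phi}_t$ is the fixed design matrix whose $s$-th row places $\bs{\phi}(a_s)^T$ in the block of $\bs{\beta}_t$ corresponding to context $\bm{x}_s$, and $\bs{\epsilon}\sim N(\bm{0},\sigma^2\bm{I})$ is independent of the GP. Since the independent GP priors on the $\bs{\beta}_j(\cdot)$ make $\bs{\beta}_t$ jointly Gaussian (with block covariance assembled from the $k_j$), the pair $(\bs{\beta}_t,\bm{y})$ is jointly Gaussian; conditioning on $\bm{y}$ and applying the Gaussian conditioning identities gives $p(\bs{\beta}_t\mid\bm{y})=N(\bs{\mu}_t,\bs{\Sigma}_t)$, which are exactly the untruncated GP posterior quantities named in the statement. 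I would not re-derive their closed form.

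Next I would handle the concavity event. By the C-spline construction of Section~\ref{sec:csplines} (polynomial order $k\le 2$), for each $t'$ the event that $a\mapsto f(a,\bm{x}_{t'})$ is concave is exactly $\w{C}_{t'}=\{\bs{\beta}_j(\bm{x}_{t'})\le 0,\ j=1,\dots,J-2\}$, and this is a measurable, \emph{deterministic} event in $\bs{\beta}_t$ because $\bs{\beta}(\bm{x}_{t'})$ is one of the blocks of $\bs{\beta}_t$ for every $t'\le t$. Writing $S=\{\bs{z}:\bs{z}\le\bs{\nu}\}$ with $\bs{\nu}$ as in the statement (entry $0$ in the first $J-2$ coordinates of each context block, $\infty$ in the slope and intercept coordinates), we have $\cap_{t'=1}^{t}\w{C}_{t'}=\{\bs{\beta}_t\in S\}$. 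Since this event is $\sigma(\bs{\beta}_t)$-measurable, $p(\bs{\beta}_t,\cap_{t'=1}^{t}\w{C}_{t'}\mid\bm{y})=p(\bs{\beta}_t\mid\bm{y})\,\mathbbm{1}\{\bs{\beta}_t\in S\}$, and normalizing yields
\begin{align*}
p\left(\bs{\beta}_t \mid \bm{y},\ \cap_{t'=1}^{t}\w{C}_{t'}\right) = \frac{N(\bs{\beta}_t;\bs{\mu}_t,\bs{\Sigma}_t)\,\mathbbm{1}\{\bs{\beta}_t\le\bs{\nu}\}}{\pr{\bs{\beta}_t\le\bs{\nu}\mid\bm{y}}},
\end{align*}
which is the density of $N_{\bs{\nu}}(\bs{\mu}_t,\bs{\Sigma}_t)$. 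One then notes the normalizer is strictly positive: $S$ is a full-dimensional orthant-type region and the $N(\bs{\mu}_t,\bs{\Sigma}_t)$ density is positive on $\mb{R}^{tJ}$ (or, if $\bs{\Sigma}_t$ is rank-deficient, on its affine support, which intersects the interior of $S$), so no degeneracy occurs.

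The main obstacle — the only non-bookkeeping point — is justifying the factorization $p(\bs{\beta}_t,\cap_{t'}\w{C}_{t'}\mid\bm{y})=p(\bs{\beta}_t\mid\bm{y})\,\mathbbm{1}\{\bs{\beta}_t\in S\}$, i.e. that the concavity information carries no information about $\bs{\beta}_t$ beyond confining it to $S$ and, in particular, does not interact with $\bm{y}$. This is precisely where measurability of $\cap_{t'}\w{C}_{t'}$ with respect to $\bs{\beta}_t$ (rather than merely with respect to $f$ or the data) is essential, and it is exactly the payoff of choosing the spline basis so that concavity becomes a coordinatewise sign constraint on the very vector being conditioned. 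A secondary point deserving one sentence is the block bookkeeping: the constrained coordinates are the first $J-2$ within each of the $t$ context blocks, so "$\bs{\nu}_j=0,\ j\le J-2$" in the statement is read block-wise and $\bs{\nu}\in\mb{R}_e^{tJ}$.
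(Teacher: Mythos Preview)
Your proposal is correct and follows essentially the same approach as the paper: both arguments observe that the Gaussian likelihood and Gaussian prior combine to give an (untruncated) multivariate normal, and that the concavity event enters only as the indicator $\mathbbm{1}\{\bs{\beta}_t\le\bs{\nu}\}$, yielding the truncated normal after normalization. Your version is somewhat more explicit---you first condition on $\bm{y}$ to obtain $N(\bs{\mu}_t,\bs{\Sigma}_t)$ and then truncate, whereas the paper writes the Bayes factorization $p(\bm{y}\mid\bs{\beta}_t)p(\bs{\beta}_t)\mathbbm{1}\{\bs{\beta}_t\le\bs{\nu}\}$ directly---and you add the careful remarks on $\sigma(\bs{\beta}_t)$-measurability, positivity of the normalizer, and block-wise reading of $\bs{\nu}$, but these are refinements of the same two-line idea rather than a different route.
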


Proving regret bounds for a UCB or Thompson 
algorithm using our reward 
model requires quantifying the posterior 
uncertainty surrounding the 
expected reward.  For convenience,
we write $f(a, \bm{x}_t) = \bm{c}_t(a)^T 
\bs{\beta}_t$, where 
 $\bm{c}_t(a) \in \mathbb{R}^{Jt}$ is
 defined so that the last $J$ elements of 
$\bm{c}_t(a)$ equal to $\bs{\phi}(a)$ while 
its remaining entries are zero. 
A primary 
challenge is that the posterior of the 
expected reward conditioned on the 
concavity information, $\bigcap_{t' =1}^{t} 
\w{C}_{t'}$, is a linear 
combination of a truncated Multivariate 
Normal vector which has no known 
closed or canonical form. Fortunately, we can 
appeal to the properties of 
Sub-Gaussian random variables to quantify the 
concentration of the 
concavity conditioned posterior.

Without conditioning on concavity information the posterior distribution of $f(a, \bm{x}_t)$ is Gaussian with variance:
\begin{align}
\label{eqn:vanilla_var}
&\sigma^2_{\bm{c}_t}(a, \bm{x}_t) = \bm{c}_t(a)^T \bs{\Sigma}_t \bm{c}_t(a) = \\ 
&k_f\left\lbrace (a, \bm{x}_t), (a, \bm{x}_t) \right\rbrace - 
\bm{k}_{t, t-1} (\bm{K}_{t-1, t-1} + \sigma^2 \bm{I})^{-1} \bm{k}_{t, t-1}, \notag
\end{align}
where $\bm{k}_{t, t-1} \in \mathbb{R}^{1 \times t-1}$ and the $i$th element is 
$k_f\left\lbrace (a, \bm{x}_t), (a_{i}, \bm{x}_i) \right\rbrace$ 
and $\bm{K}_{t-1, t-1}$ is the prior covariance matrix of the $\{f(a_{t'}, \bm{x}_{t'})\}_{t' = 1}^{t}$. Clearly, the 
centered posterior without conditioning on concavity information is Sub-Gaussian with variance proxy 
$\sigma^2_{\bm{c}_t(a)}$. The following result shows that linear combinations of upper truncated Multivariate normal random 
variables share a variance proxy with its un-truncated  counterpart. 

\begin{lemma}
\label{lemma:paper_subgauss}
If $\bs{\beta}_t$ has distribution $N_{\bs{\nu}}(\bs{\mu}_t, \bs{\Sigma}_t)$, Then the random variable $\bm{c}_t(a)^T (\bs{\beta}_t - \bs{\mu}_t)$ is Sub-Gaussian with a variance proxy $\sigma^2_{\bm{c}_t}(a, \bm{x}_t) = \bm{c}_t(a)^T \bs{\Sigma}_t \bm{c}_t(a)$.
\end{lemma}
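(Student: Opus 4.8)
\begin{proofsketch}
The plan is to control the moment generating function (MGF) of $W := \bm{c}_t(a)^T(\bs{\beta}_t - \bs{\mu}_t)$ and show it never exceeds that of a centered Gaussian with variance $\sigma^2_{\bm{c}_t}(a,\bm{x}_t) = \bm{c}_t(a)^T\bs{\Sigma}_t\bm{c}_t(a)$. Let $S$ denote the set of coordinates of $\bs{\beta}_t$ truncated at $0$, i.e.\ those indexing $\bs{\beta}_j(\bm{x}_{t'})$ with $j \le J-2$, and let $\bm{G}\sim N(\bm{0},\bs{\Sigma}_t)$ be the centered Gaussian underlying the un-truncated law. By Lemma~\ref{lemma:trunc_lemma}, $W$ has the law of $\bm{c}_t(a)^T\bm{G}$ conditioned on the lower-orthant event $B := \{\bm{G}_j \le -(\bs{\mu}_t)_j : j\in S\}$, so that for every $\lambda\in\mathbb{R}$,
\begin{align*}
\mathbb{E}[e^{\lambda W}] = \frac{\mathbb{E}\big[e^{\lambda\bm{c}_t(a)^T\bm{G}}\,\mathbbm{1}_B\big]}{\Pr(B)} .
\end{align*}

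The first step is a Gaussian change of measure. Completing the square in the $N(\bm{0},\bs{\Sigma}_t)$ density $\phi_{\bs{\Sigma}_t}$ yields the exponential-tilting identity $e^{\lambda\bm{c}_t(a)^T g}\,\phi_{\bs{\Sigma}_t}(g) = \exp\!\big(\tfrac{1}{2}\lambda^2\sigma^2_{\bm{c}_t}(a,\bm{x}_t)\big)\,\phi_{\bs{\Sigma}_t}\!\big(g-\lambda\bs{\Sigma}_t\bm{c}_t(a)\big)$; integrating over $B$ and renormalising,
\begin{align*}
\mathbb{E}[e^{\lambda W}] = \exp\!\big(\tfrac{1}{2}\lambda^2\sigma^2_{\bm{c}_t}(a,\bm{x}_t)\big)\cdot\frac{\Pr(B_\lambda)}{\Pr(B)},
\end{align*}
where $B_\lambda := \{\bm{G}_j \le -(\bs{\mu}_t)_j - \lambda(\bs{\Sigma}_t\bm{c}_t(a))_j : j\in S\}$ is the event $B$ with the mean of $\bm{G}$ shifted by $\lambda\bs{\Sigma}_t\bm{c}_t(a)$. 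The claim is therefore equivalent to the orthant inequality $\Pr(B_\lambda)\le\Pr(B)$ for all $\lambda$, with $\sigma^2_{\bm{c}_t}(a,\bm{x}_t)$ being exactly the pre-truncation posterior variance of \eqref{eqn:vanilla_var}.

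For the orthant inequality, the structural fact I would lean on is that $\bm{c}_t(a)$ is nonnegative on every truncated coordinate: it is supported on the single block of $\bm{x}_t$, where its entries are the C-spline values $C_j(a)$, the identity $a$, and the constant $1$, all $\ge 0$ on $[0,1]$. For $\lambda > 0$ one shows $\Pr(B_\lambda)\le\Pr(B)$ by arguing that the tilt moves $B$'s defining inequalities inward --- combining this nonnegativity with a sign analysis of $\bs{\Sigma}_t\bm{c}_t(a)$ restricted to $S$ --- so that $B_\lambda\subseteq B$; this already delivers the upper-tail half $\mathbb{E}[e^{\lambda W}]\le\exp(\tfrac{1}{2}\lambda^2\sigma^2_{\bm{c}_t}(a,\bm{x}_t))$ for $\lambda>0$, which is the only half the subsequent UCB analysis invokes.

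I expect the case $\lambda < 0$ to be the main obstacle. The one-sided truncation does nothing to constrain $W$ from below, so the direct set-inclusion bound runs the wrong way; the underlying issue is that $W$ is centered at $\bs{\mu}_t$ rather than at the truncated mean $\mathbb{E}[\bs{\beta}_t]$. A clean route that sidesteps this is to use log-concavity directly: $N_{\bs{\nu}}(\bs{\mu}_t,\bs{\Sigma}_t)$ is a Gaussian restricted to the convex orthant $\{z_j \le 0 : j\in S\}$, hence is at least as log-concave as $N(\bs{\mu}_t,\bs{\Sigma}_t)$ itself, so by Caffarelli's contraction theorem the optimal transport map from $N(\bs{\mu}_t,\bs{\Sigma}_t)$ onto the truncated law is $1$-Lipschitz in the Mahalanobis norm $\|\cdot\|_{\bs{\Sigma}_t^{-1}}$; since the dual norm of $\bm{c}_t(a)$ equals $\|\bm{c}_t(a)\|_{\bs{\Sigma}_t}=\sigma_{\bm{c}_t}(a,\bm{x}_t)$, Gaussian concentration of Lipschitz functions then yields a two-sided Sub-Gaussian bound with variance proxy $\sigma^2_{\bm{c}_t}(a,\bm{x}_t)$ --- around $\mathbb{E}[\bs{\beta}_t]$, which one finally reconciles with the $\bs{\mu}_t$ centering. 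Either way, the delicate point throughout is this centering and the associated left tail, not the Gaussian computations.
\end{proofsketch}
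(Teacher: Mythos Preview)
Your reduction goes wrong at the centering step. The paper's definition of Sub-Gaussian is always relative to the variable's own mean, so the shift by $\bs{\mu}_t$ in $W=\bm{c}_t(a)^T(\bs{\beta}_t-\bs{\mu}_t)$ is immaterial: the claim is $\mathbb{E}\big[e^{\lambda(W-\mathbb{E} W)}\big]\le\exp\big(\tfrac12\lambda^2\sigma^2_{\bm{c}_t}\big)$, not $\mathbb{E}[e^{\lambda W}]\le\exp\big(\tfrac12\lambda^2\sigma^2_{\bm{c}_t}\big)$. After your tilting identity the correct target is therefore not the orthant inequality $\Pr(B_\lambda)\le\Pr(B)$ but
\[
g(\lambda):=\log\frac{\Pr(B_\lambda)}{\Pr(B)}-\lambda\,\mathbb{E}[W]\ \le\ 0\qquad\text{for all }\lambda\in\mathbb{R}.
\]
Your orthant inequality is in fact false on one side of $\lambda=0$ whenever $\mathbb{E}[W]\ne 0$ (differentiate your tilting identity at $\lambda=0$ to see $\tfrac{d}{d\lambda}\log\Pr(B_\lambda)\big|_{0}=\mathbb{E}[W]$), and the sign argument you sketch for $\lambda>0$ does not go through either: nonnegativity of $\bm{c}_t(a)$ says nothing about the signs of $(\bs{\Sigma}_t\bm{c}_t(a))_j$ for $j\in S$, since the posterior covariance $\bs{\Sigma}_t$ has no sign structure in general. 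The subsequent analysis also needs both tails (the concentration in Lemma~\ref{lemma:prob_bound} is two-sided), so an upper-tail half would not suffice anyway.

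The paper's proof works directly with $g$: one has $g(0)=0$ and, by the derivative computation above, $g'(0)=0$; it then suffices to show $g$ is concave, which forces $g\le 0$ everywhere. Concavity comes from Pr\'ekopa's theorem: the map $(w,\lambda)\mapsto\exp\big\{-\tfrac12(w-\lambda\bs{\Sigma}_t\bm{c}_t(a))^{\T}\bs{\Sigma}_t^{-1}(w-\lambda\bs{\Sigma}_t\bm{c}_t(a))\big\}$ is jointly log-concave, and marginalising $w$ over the convex truncation region preserves log-concavity in $\lambda$. Your Caffarelli route is not wrong in spirit---it would yield Sub-Gaussianity with proxy $\sigma^2_{\bm{c}_t}$ around the \emph{truncated} mean, which is exactly the target (so your ``reconciliation with the $\bs{\mu}_t$ centering'' is backwards)---but it is substantially heavier machinery than the elementary Pr\'ekopa argument and buys no improvement in the constant.
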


The preceding lemma allows us to obtain concentration bounds for the reward estimate 
and then subsequently apply arguments from 
\citep{srinivas2009gaussian, krause2011contextual, russo2014learning} to 
obtain regret bounds.

\subsection{Maximum Information Gain}

\begin{figure*}[ht]
    \centering
    \includegraphics[width=1.\linewidth]{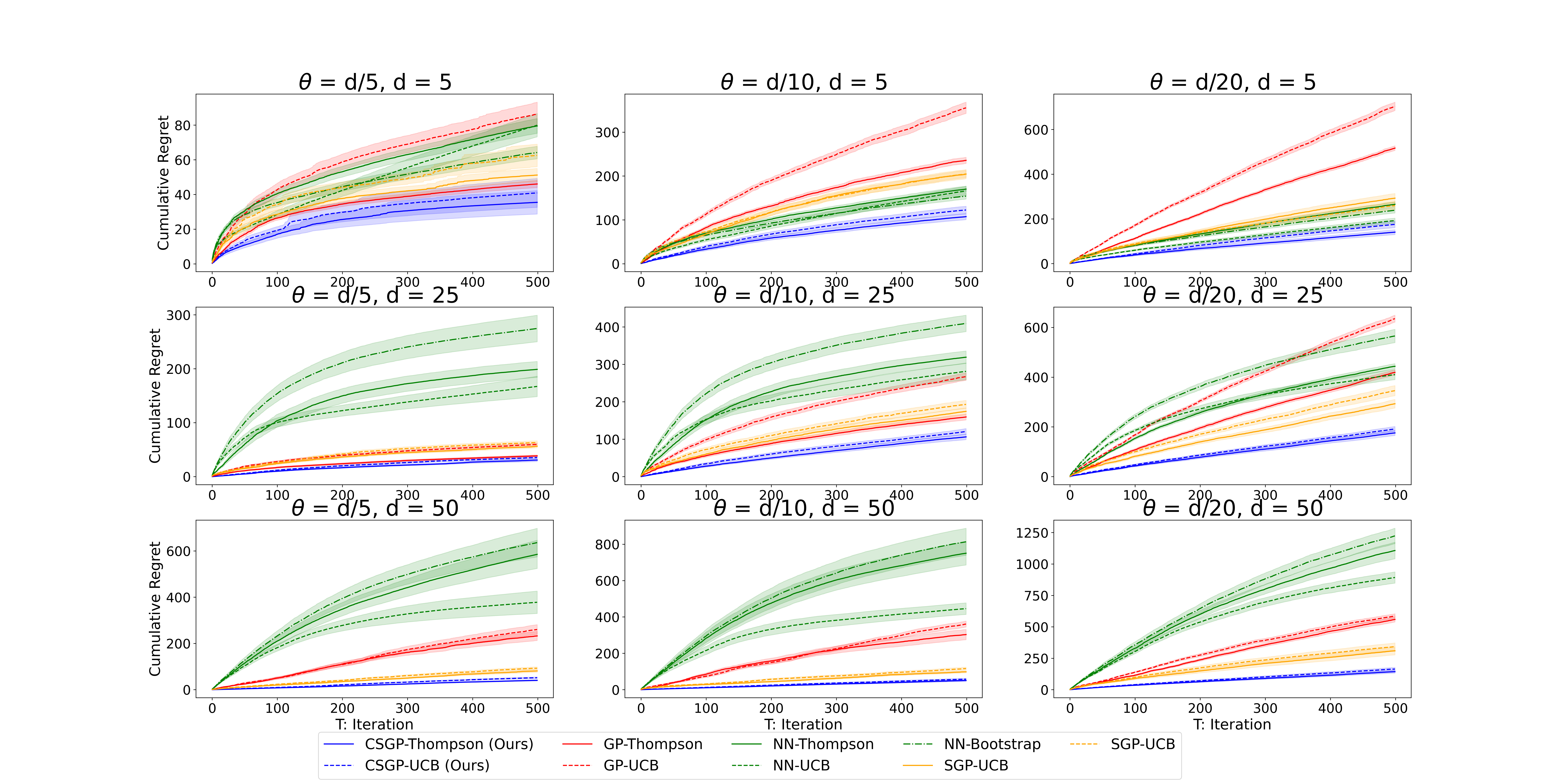}
    \caption{Cumulative regret for numerical simulation study. The
    proposed methods, CSGP-Thompson and CSGP-UCB, obtain 
    markedly lower
    regret than competing methods across a range of problem dimensions 
    and scalings.}
    \label{fig:Simulation_Study}
\end{figure*}

Following \citet{srinivas2009gaussian, krause2011contextual}, we will use information theoretical quantities to bound the regret of our algorithm. Regret will depend on how quickly we can gather information about the underlying reward function. Therefore we will be interested in the mutual information $\mbox{I}(\bm{y}_{A}, f)$ between the set of observations $\bm{y}_{A}$ and the underlying function $f$. $\mbox{I}(\bm{y}_{A}, f)$ represents the reduction in uncertainty surrounding the mean reward function $f$ upon observing rewards $\bm{y}_{A}$. Our regret bounds will take the the form $\w{O}^{*}(\sqrt{T \gamma_T} )$ 
where  $\gamma_T$ is the \textit{maximal information gain} defined as 
\begin{align*}
\gamma_T := \max_{A \subset \w{A}, |A| = T} \mbox{I}(\bm{y}_{A}, f) .
\end{align*}

The maximal information gain, $\gamma_T$, for our model can be bounded using the results of 
\citet{srinivas2009gaussian, krause2011contextual}. Note that the Gaussian process induced on 
$f(a, \bm{x})$ by our model has a covariance function composed of $J$ additive components. 
Each additive component is the product of a generic $d$-dimensional covariance function (such 
as Gaussian or Mat\'ern) and a $1$-dimensional linear kernel derived from the C-Spline features. \citet{krause2011contextual} show how to bound $\gamma_T$ for additive and product compositions. Thus, we are able to obtain the following 
asymptotic bounds for the CSGP model. 
\begin{lemma}
    \label{lemma:information_gain}
    Let the prior for the expected reward function $f(a, \bm{x})$ be the CSGP model defined in Equation \ref{eqn:function_model} such that the maximal information gain of each $k_j(\cdot, \cdot)$ at time $T$ is bounded by $\gamma^{*}_T$. Then the maximal information gain for the CSGP covariance function given by  Equation \ref{eqn:f_gp} satisfies
    \begin{align*}
    \gamma_T \leq  J \gamma^{*}_T + 2J \log(T).
    \end{align*}
\end{lemma}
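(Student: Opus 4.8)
The plan is to exploit the explicit additive-and-product structure of the CSGP covariance function in Equation~\ref{eqn:f_gp} and then invoke the maximal-information-gain composition bounds of \citet{srinivas2009gaussian} and \citet{krause2011contextual}. Let $\ell_j(a,a') := \bs{\phi}_j(a)\,\bs{\phi}_j(a')$ denote the rank-one (degenerate ``linear'') kernel associated with the $j$th feature of $\bs{\phi}$. Then the covariance function of Equation~\ref{eqn:f_gp} decomposes as
\begin{align*}
k_f\{(a,\bm{x}),(a',\bm{x}')\} \;=\; \sum_{j=1}^{J} \ell_j(a,a')\, k_j(\bm{x},\bm{x}') \;=:\; \sum_{j=1}^{J} \kappa_j\{(a,\bm{x}),(a',\bm{x}')\},
\end{align*}
so that $f = \sum_{j=1}^J g_j$ for independent GPs $g_j(a,\bm{x}) = \bs{\phi}_j(a)\bs{\beta}_j(\bm{x})$, and each $\kappa_j$ is the product of a one-dimensional rank-one kernel in the action and the $d$-dimensional context kernel $k_j$. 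The first step is to note that every $\ell_j$ is a \emph{bounded} kernel: the M-spline normalization gives $0 \le C_j(a) \le 1$ on $[0,1]$, and the identity and constant features are likewise bounded by $1$ on $\mathcal{A}\subseteq[0,1]$, so $\sup_{a\in\mathcal{A}}\lvert\bs{\phi}_j(a)\rvert \le 1$.

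Second, I would bound each $\gamma_T(\kappa_j)$ via the product-composition result of \citet{krause2011contextual}: the information gain of a product of a kernel with information gain at most $\gamma^{*}_T$ with a rank-one kernel is at most $\gamma^{*}_T + \log T$, the boundedness of $\ell_j$ from the first step being what keeps the effective observation noise controlled so the bound is stated in terms of the unscaled $\gamma^{*}_T$. Hence $\gamma_T(\kappa_j) \le \gamma^{*}_T + \log T$. Third, I would combine the $J$ additive components using the additive-composition argument of \citet{krause2011contextual}: since $f=\sum_j g_j$ is a deterministic function of $(g_1,\ldots,g_J)$, the data-processing inequality gives $\mathrm{I}(\bm{y}_A; f) \le \mathrm{I}(\bm{y}_A; g_1,\ldots,g_J) = \tfrac12\log\det\!\big(\bm{I} + \sigma^{-2}\sum_j \bm{K}_{j,A}\big)$ for $\bm{K}_{j,A}$ the Gram matrix of $\kappa_j$ at the $T$ points indexed by $A$; the submultiplicativity $\det(\bm{I}+\sum_j \bm{M}_j)\le\prod_j\det(\bm{I}+\bm{M}_j)$ for PSD $\bm{M}_j$ then gives $\mathrm{I}(\bm{y}_A; f) \le \sum_j \tfrac12\log\det(\bm{I}+\sigma^{-2}\bm{K}_{j,A}) \le \sum_j \gamma_T(\kappa_j)$. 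Maximizing over $A$ and substituting the per-component bound yields $\gamma_T \le \sum_{j=1}^{J}(\gamma^{*}_T + \log T) = J\gamma^{*}_T + J\log T \le J\gamma^{*}_T + 2J\log T$.

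The step I expect to require the most care, and the main obstacle, is justifying the product bound $\gamma_T(\kappa_j)\le\gamma^{*}_T+\log T$ and the applicability of the cited composition lemmas in our exact setting. Two points must be checked: (i) that the degenerate kernel $\ell_j(a,a')=\bs{\phi}_j(a)\bs{\phi}_j(a')$ genuinely qualifies as a bounded, rank-one kernel for the product bound --- precisely what the uniform boundedness of $\bs{\phi}_j$ provides, the construction being equivalent to an ordinary linear kernel after the feature map $a\mapsto\bs{\phi}_j(a)$; and (ii) that the composition results of \citet{krause2011contextual}, stated for kernels on product domains, transfer to the common action/context stream $\{(a_t,\bm{x}_t)\}_{t\le T}$ --- immediate, since their proofs only manipulate eigenvalues of the $T\times T$ Gram matrices at the observed points, which coincide in the two formulations. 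One may additionally remark that carrying $\sup_a\lvert\bs{\phi}_j(a)\rvert\le 1$ through the product step removes the $\log T$ overhead entirely, giving the sharper $\gamma_T\le J\gamma^{*}_T$; the form stated in Lemma~\ref{lemma:information_gain} is the conservative bound obtained by applying the cited composition lemmas as black boxes.
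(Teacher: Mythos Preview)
Your proposal is correct and follows essentially the same route as the paper: decompose $k_f$ as a sum of $J$ products of a rank-one action kernel $\ell_j(a,a')=\bs{\phi}_j(a)\bs{\phi}_j(a')$ with the context kernel $k_j$, apply the product-composition bound of \citet{krause2011contextual} to each summand to get $\gamma_T(\kappa_j)\le\gamma^{*}_T+\log T$, and then combine the $J$ pieces additively. The only difference is that the paper simply cites the additive-composition inequality from the same reference (which carries an extra $2\log T$ overhead), whereas you supply a direct determinant-submultiplicativity argument for the additive step that avoids this overhead and in fact yields the tighter $J\gamma^{*}_T+J\log T$; both comfortably imply the stated bound.
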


\citet{vakili2021information} provides bounds for $\gamma^{*}_T$ for common kernels such as the Gaussian and Mat\'ern. Throughout this paper we select $k_j(\cdot, \cdot)$ to be Gaussian kernels so that the maximum information gain for CSGP satisfies $\gamma_T = \w{O}\left\lbrace J\log(T)^{d+1}\right\rbrace$. 

\subsection{UCB Algorithm}
We now define a UCB algorithm using our reward model. At time $t$ the concavity conditioned posterior mean for the expected reward is 
\begin{align}
\label{eqn:trunc_mean}
\mu_t^{*}(a, \bm{x}_t) = \mathbb{E}\left\lbrace
\bm{c}_t(a)^T \bs{\beta}_t | \bm{y}_{t-1},  \cap_{t' = 1}^t \w{C}_t 
\right\rbrace.  
\end{align}
This posterior mean can be calculated numerically with high accuracy using truncated multivariate normal identities given by \citet{kan2017moments, manjunath2021moments}.  Let $\sigma^2_{\bm{c}_t}(a, \bm{x}_t)$ be the variance of the unconstrained CSGGP posterior as defined in equation \ref{eqn:vanilla_var}. The CSGP-UCB algorithm selects action $a_t$ such that
\begin{align*}
a_t = \mbox{argmax}_{a \in \w{A}} \; \mu^{*}_t(a, \bm{x}_t) + \alpha_t^{\frac{1}{2}}\sigma^2_{\bm{c}_t}(a, \bm{x}_t), 
\end{align*}
where $\alpha_t$ is a pre-specified sequence of increasing constants. Our algorithm uses the 
mean of the posterior conditioned on the concavity information $\cap_{t' = 1}^t \w{C}_{t'}$. 
However, our UCB variance estimate relies on the variance of the unconditioned vanilla CSGP 
posterior. The variance of the vanilla CSGP posterior can be calculated in closed form and  
bounds the deviation of the true function from the posterior mean even after conditioning on 
concavity information. This property allows us to derive regret bounds for the proposed
algorithm.

\subsection{Regret Analysis}
We present the regret analysis for our UCB algorithm under the Bayesian assumption that the 
expected reward function $f(a, \bm{x})$ is a sample from CSGP prior described in Section \ref{subsec:reward_model}. This assumption is commonly made and analyzed in the bandit literature \citep{srinivas2009gaussian, krause2011contextual, chen2012joint, wang2015optimization, bogunovic2016time,  kathuria2016batched, kassraie2022neural}.

To the best of our knowledge, existing algorithms in the literature do not possess proven regret guarantees under the Bayesian assumption and the condition that the bandit-algorithm receives concavity information. Any GP-posterior that is not conditioned on this concavity information will be mis-specified and cannot be used to make accurate probabilistic statements about the reward function.  

Leveraging the properties of the constrained CSGP posterior, we provide the following regret bounds under concavity information and the Bayesian assumption that $f$ is a sample from the CSGP prior 
\begin{proposition}
\label{prop:discrete_ucb}
Let the expected reward function $f$ be a sample from a known CSGP prior defined in equations \ref{eqn:function_model} and \ref{eqn:f_gp} with noise model $N(0, \sigma^2)$. Also suppose that at each time step $t$ the bandit algorithm receives noisy reward $y_t$ as well as concavity information that $\bs{\beta}_j(\bm{x}_{t'}) \leq 0$ for $j = 1 \dots J- 2, t' \leq t$. Pick $\delta \in (0, 1)$ and suppose one of the following assumptions holds:

\begin{itemize}
    \item $\w{A}$ is finite with  $\alpha_t = 2\log( 2|\w{A}|t^2 \pi^2/6 \delta)$

    \item $\w{A} \subset [0, 1]$ is compact and convex and the CSGP kernel $k_f(\cdot, \cdot)$ satisfies the following bound on the GP sample paths. For some constants $\eta, \zeta$
    \begin{align*}
    \Pr \{ \sup_{\bm{v} \in \w{A} \times \w{X}} |\partial f/ \partial \bm{v}_j| > L \} \leq \eta \exp\left\lbrace 
    -(L/\zeta)^2\right\rbrace,
    \end{align*}
    for $\forall j \in \{ 1, \dots, d+1 \}$.
    And we choose 
    \begin{align*}
    \alpha_t = 2\log(4 \pi^2 t^2/3\delta) + 2\log\left\lbrace 
    t^2  \zeta \sqrt{\log\left(2 \eta/\delta \right)} \right\rbrace.
    \end{align*}
\end{itemize}

Then running CSGP-UCB obtains a regret bound $\w{O}^{*}(\sqrt{T \gamma_{T} \alpha_T})$ with high probability. Precisely:
\begin{align*}
\Pr\left\{ R_t \leq \sqrt{C_1 T \alpha_T \gamma_T} + 2 , \forall T \geq 1 \right\} \geq 1 - \delta,
\end{align*}
where $C_1 = \frac{8}{\log(1 + \sigma^{-2})}$.
\end{proposition}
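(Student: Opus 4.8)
The plan is to follow the Bayesian GP-UCB argument of \citet{srinivas2009gaussian} and its contextual version \citet{krause2011contextual}, the only non-standard ingredient being that our confidence intervals pair the \emph{constrained} posterior mean $\mu_t^*(a,\bm{x}_t)$ with the \emph{unconstrained} posterior variance $\sigma^2_{\bm{c}_t}(a,\bm{x}_t)$; \Cref{lemma:paper_subgauss} is what makes this pairing a valid high-probability confidence region, and everything else is bookkeeping with information gain. Throughout, all probabilities are taken conditional on the concavity events $\bigcap_{t'}\w{C}_{t'}$, which is exactly what keeps the posterior well specified when $f$ is a CSGP sample.

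First I would establish the per-round confidence bound. Conditioning on $\bm{y}_{t-1}$ and $\bigcap_{t'=1}^t\w{C}_{t'}$, \Cref{lemma:trunc_lemma} gives that $\bs{\beta}_t$ is an upper-truncated multivariate normal, so $f(a,\bm{x}_t)-\mu_t^*(a,\bm{x}_t)=\bm{c}_t(a)^T\bigl(\bs{\beta}_t-\mathbb{E}[\bs{\beta}_t\mid\cdot]\bigr)$ is, by \Cref{lemma:paper_subgauss}, Sub-Gaussian with variance proxy $\sigma^2_{\bm{c}_t}(a,\bm{x}_t)$. The Chernoff bound recalled in Section 2.3 then gives $\Pr\{|f(a,\bm{x}_t)-\mu_t^*(a,\bm{x}_t)|>\alpha_t^{1/2}\sigma_{\bm{c}_t}(a,\bm{x}_t)\}\le 2e^{-\alpha_t/2}$. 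In the finite case, a union bound over $a\in\w{A}$ and $t\ge 1$ with $\alpha_t=2\log(2|\w{A}|t^2\pi^2/6\delta)$ makes the total failure probability $\sum_t 2|\w{A}|e^{-\alpha_t/2}=\sum_t(6\delta/\pi^2)t^{-2}=\delta$. In the compact case I would instead discretize $\w{A}$ at a per-round resolution growing in $t$, bound the discretization error $|f(a,\bm{x}_t)-f([a]_t,\bm{x}_t)|$ (with $[a]_t$ the nearest grid point) uniformly using the sample-path gradient tail bound (the $\eta,\zeta$ assumption) together with a union bound over the $d+1$ coordinates and over $t$, and absorb the resulting error into the enlarged $\alpha_t$; this is the discretization argument of Lemmas 5.5--5.8 of \citet{srinivas2009gaussian}, which leaves an $O(t^{-2})$ per-round slack that sums to the additive constant shown as ``$+2$''. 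Either way, with probability at least $1-\delta$ the event $\w{E}$ holds: $|f(a,\bm{x}_t)-\mu_t^*(a,\bm{x}_t)|\le\alpha_t^{1/2}\sigma_{\bm{c}_t}(a,\bm{x}_t)$ for all $t$ and all relevant $a$.

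On $\w{E}$ the usual UCB sandwich applies: letting $a_t^\star$ be a maximizer of $f(\cdot,\bm{x}_t)$ and using that $a_t$ maximizes the acquisition function $\mu_t^*(\cdot,\bm{x}_t)+\alpha_t^{1/2}\sigma_{\bm{c}_t}(\cdot,\bm{x}_t)$ (plus the discretization step for $a_t^\star$ in the compact case),
\[
r_t=f(a_t^\star,\bm{x}_t)-f(a_t,\bm{x}_t)\le 2\alpha_t^{1/2}\sigma_{\bm{c}_t}(a_t,\bm{x}_t).
\]
Since $\alpha_t$ is increasing, $\sum_{t\le T}r_t^2\le 4\alpha_T\sum_{t\le T}\sigma^2_{\bm{c}_t}(a_t,\bm{x}_t)$. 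Because $\sigma^2_{\bm{c}_t}$ is precisely the ordinary (unconstrained) GP posterior variance given $\bm{y}_{t-1}$, and assuming as usual the prior scaling makes $\sigma^2_{\bm{c}_t}\le 1$, the elementary inequality $s^2\le\frac{1}{\log(1+\sigma^{-2})}\log(1+\sigma^{-2}s^2)$ applied termwise together with the information-gain identity $\tfrac12\sum_{t\le T}\log(1+\sigma^{-2}\sigma^2_{\bm{c}_t}(a_t,\bm{x}_t))=\mathrm{I}(\bm{y}_T;f)\le\gamma_T$ gives $\sum_{t\le T}\sigma^2_{\bm{c}_t}(a_t,\bm{x}_t)\le\frac{2}{\log(1+\sigma^{-2})}\gamma_T$. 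Cauchy--Schwarz then yields $R_T\le\sqrt{T\sum_{t\le T}r_t^2}\le\sqrt{\tfrac{8}{\log(1+\sigma^{-2})}\,T\alpha_T\gamma_T}=\sqrt{C_1 T\alpha_T\gamma_T}$; adding the constant discretization slack in the compact case gives the ``$+2$'', and substituting the bound on $\gamma_T$ from \Cref{lemma:information_gain} produces the $\w{O}^*(\sqrt{T\gamma_T\alpha_T})$ rate.

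The crux --- and the only place the concavity structure is used --- is the confidence bound: the concavity-conditioned posterior of $f(a,\bm{x}_t)$ is a linear functional of a truncated multivariate Gaussian with no closed form, so one cannot simply read off Gaussian credible intervals as in vanilla GP-UCB. \Cref{lemma:paper_subgauss} circumvents this by showing truncation from above does not inflate the Sub-Gaussian variance proxy, which is exactly what lets CSGP-UCB use the closed-form \emph{unconstrained} variance as the exploration radius while still centering on the numerically computed \emph{constrained} mean $\mu_t^*$. The remaining care is routine: verifying the compact-case discretization needs only Lipschitzness of $f$ in $a$ (the context $\bm{x}_t$ being revealed before the action is chosen), and checking that the prior scaling makes $\sigma^2_{\bm{c}_t}\le 1$ so the leading constant is exactly $C_1$.
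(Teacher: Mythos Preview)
Your proposal is correct and follows essentially the same approach as the paper: use \Cref{lemma:trunc_lemma} and \Cref{lemma:paper_subgauss} to obtain the sub-Gaussian concentration bound pairing the constrained mean $\mu_t^*$ with the unconstrained variance $\sigma^2_{\bm{c}_t}$, take a union bound (respectively, a per-round discretization of $\w{A}$ in the compact case) to get uniform confidence sets, and then invoke the standard UCB regret-to-information-gain argument of \citet{srinivas2009gaussian,krause2011contextual} via the sandwich $r_t\le 2\alpha_t^{1/2}\sigma_{\bm{c}_t}(a_t,\bm{x}_t)$ and Cauchy--Schwarz. The paper organizes the same steps into Lemmas for the concentration bound and then cites \citet{krause2011contextual} directly for the remainder, and in the compact case discretizes only in the action direction (as you correctly note at the end, since $\bm{x}_t$ is revealed before choosing $a_t$).
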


In the continuous action space case, the condition on 
$k_f(\cdot, \cdot)$ is satisfied when individual coefficient kernels 
$k_j(\cdot, \cdot)$ are four times differentiable. Example kernels that 
satisfy these condition include the common Gaussian kernel and the 
Matern kernel with $\nu > 2$. See Appendix \ref{sup:ucb_continuous} for further discussion of this condition.

This regret bound asymptotically matches the bound derived by 
\citet{krause2011contextual} under the setting where the decision maker 
receives no concavity information. In the presence of concavity 
information, running vanilla contextual GP-UCB under our stated 
assumptions has no theoretical guarantees derived in the current 
literature.  Asymptotically, we are unlikely to improve on this bound as 
the benefit of conditioning on the concavity information vanishes as $T  
\rightarrow \infty$. We suspect that a tighter bound can be achieved by 
possibly deriving $\gamma_T$ for our specific truncated posterior. We 
reserve this for future work.

\textbf{Remark} The concentration properties of the CSGP posterior can also be used to derive Bayesian Regret bounds for Thompson sampling using proof techniques from \citet{russo2014learning}. We provide Bayesian Regret bounds for the discrete and continuous action cases in Appendix \ref{sup:thompson}.

\section{Numerical Experiments}

\subsection{Experimental Set Up}

We evaluate our CSGP-UCB algorithms on simulated examples as well as a 
Warfarin dosing test functions studied by \citet{chen2016personalized}.  
For all experiments, we use a cubic C-splines model with 5 uniform knots 
on $[0, 1]$. We compare our algorithms to eight baselines: (1) GP-
Thompson, which samples the optimal action from the  posterior of the 
unconstrained GP; we 
do not use the variance inflation from \citep{chowdhury2017kernelized}
as we found that the un-inflated version  
produced lower regret in our examples; (2) GP-UCB, the contextual GP based algorithm from \citep{krause2011contextual}; (3) NN-Bootstrap \citep{riquelme2018deep}, which trains multiple neural networks on boot-strapped samples and selects an optimal action using a randomly chosen model at each round; (4) NN-Thompson \citep{zhang2021neural} which samples optimum from the neural network posterior derived from the 
neural network prediction and empirical Tangent Kernel; (5) NN-UCB \citep{kassraie2022neural}, a neural network based UCB algorithm with posterior variance derived from the neural tangent kernel of an untrained network; (6/7) SGP-UCB/SGP-Thompson, which implement  UCB/Thompson sampling using the Spline-Kernel based GP  as the reward model without any concavity constraints. We include the unconstrained version to confirm that improved regret performance results from the concavity conditions and not only the spline kernel; (8) CSGP-Thompson, a Thompson sampling algorithm using the CSGP model, which we include to investigate whether using the full constrained CSGP posterior results in significantly improved performance relative to CSGP-UCB which uses the variance of the unconstrained GP posterior. We provide Bayesian Regret guarantees for the Thompson sampling algorithm in Appendix \ref{sup:thompson}. Details for the hyper-parameters of the methods can be found in Appendix \ref{sup:exp_details}.

For the Gaussian process based methods, we learn the kernel hyper-
parameters by maximizing the marginal likelihood on the observed data \citep{Rasmussen2006Gaussian}. 
For neural network  based methods, we train/update the appropriate 
network's parameters at every round. For all UCB algorithms, we follow 
\citet{krause2011contextual, srinivas2009gaussian} and choose the 
confidence parameter $\alpha_t$ to satisfy the bounds with probability 
$\delta = 0.1$.

Due to computational considerations we also make simplifications in the 
implementation of the algorithms. Following 
\citet{zhou2020neural, zhang2021neural, kassraie2022neural} for the 
neural network based methods, we only calculate the posterior variance 
using the diagonal of the empirical tangent kernel matrix. For CSGP-UCB 
and CSGP-Thompson, we use a posterior that is only conditioned on the 
concavity information of the current round, $\w{C}_t$. This 
simplification  guarantees that the posterior functions obey the concave 
constraint for the current context. Empirical results below demonstrate 
that this works well in practice.

As \citet{zhang2021neural, kassraie2022neural} provide 
algorithms only for a finite action space,
we test the bandits with $\w{A}$ 
defined as a uniform grid of size 100 on $[0, 1]$. 

\subsection{Numerical Simulations}

We simulate functions of the form:
\begin{align*}
f(a, \bm{x}) = \sum_{s = 1}^{10} h_s(a) \eta_s(\bm{x})^2 + \gamma(x),
\end{align*}
where $h_s(a)$ is sampled from a Gaussian Process prior conditioned on 
$h_s''(a) \leq 0$ on a very fine grid in $[0, 1]$. This conditioning 
produces functions $h_s(a)$ that are effectively concave in the interval 
\citep{wang2016estimating}. We sample $\eta_s(\bm{x})$ and 
$\gamma(\bm{x})$ from independent Gaussian Processes equipped with 
isotropic Gaussian kernels. The concavity of $f(a, \bm{x})$ with respect 
to $a$ follows as it is a sum of functions concave in $a$. We sample the 
contexts uniformly such that $\bm{x} \in [0, 1]^d$. A crucial property 
of our simulation set up is that the CSGP model is mis-specified for 
this simulation; i.e., $f(a, \bm{x})$ does not take the form of twice 
differentiable piecewise polynomials for any $\bm{x}_t$. We sample the 
noisy rewards from $y = f(a, \bm{x}) + \epsilon$ where $\epsilon \sim 
N(0, 0.1)$. We initialize each bandit algorithm with $n = 25$ 
observations and run each algorithm for $T = 500$ iterations. 

We consider multiple simulation scenarios that control the difficulty of 
function estimation and optimization. We vary context dimension and the 
length-scales of the kernels governing the GPs for 
$\eta_s(\bm{x}), \gamma(\bm{x})$. Smaller length-scales for GPs lead to 
reward function samples that vary more quickly and are thus more 
difficult to learn. We examine the performance of the various algorithms 
for $d = 5, 25, 50$ with length-scales $\theta = d/5, d/10, d/25, d/50$. 
Scaling the length-scale by dimension ensures the problem does not 
become intractable without massive sample sizes. 

Figure \ref{fig:Simulation_Study} shows the mean cumulative regret  of 
 the various bandit algorithms across $n = 25$ replications for each 
 experimental setting along with 1 standard error bars.  Across the 
 simulations, CSGP obtains uniformly lower regret than competing 
 methods. We also see that despite CSGP-UCB using the unconstrained GP 
 variance, it performs similarly to CSGP-Thompson. Importantly, CSGP 
 outperforms SGP which indicates that the improvement in cumulative 
 regret is caused by the concavity conditions and not only our use of a 
 C-spline inspired kernel. We do observe empirically however that simply 
 using the SGP reward model does seem to bias  reward function estimates 
 to be more uni-modal and similar to concave functions.

For the simulation settings with larger length-scales (slower varying 
reward functions that are easier to learn) the performance of the CSGP 
algorithms and GP are more similar. We expect this result as the 
concavity information is less important when the covariance between 
reward observations is higher and the function is easier to learn. As 
length-scales increase and the sampled reward function become 
increasingly difficult to learn, we see that the CSGP algorithms vastly 
outperform their vanilla GP counterparts. The neural network algorithms 
perform worst relative to CSGP for the larger length-scales, as these  algorithms take less advantage of the inherent smoothness of the reward function. We also see that the NN method is much more sensitive to the dimension of the problem, as the performance of neural netowrk degrades  dramatically at $d=25$.

\begin{figure}[h]
    \centering
    \includegraphics[width=1\linewidth]{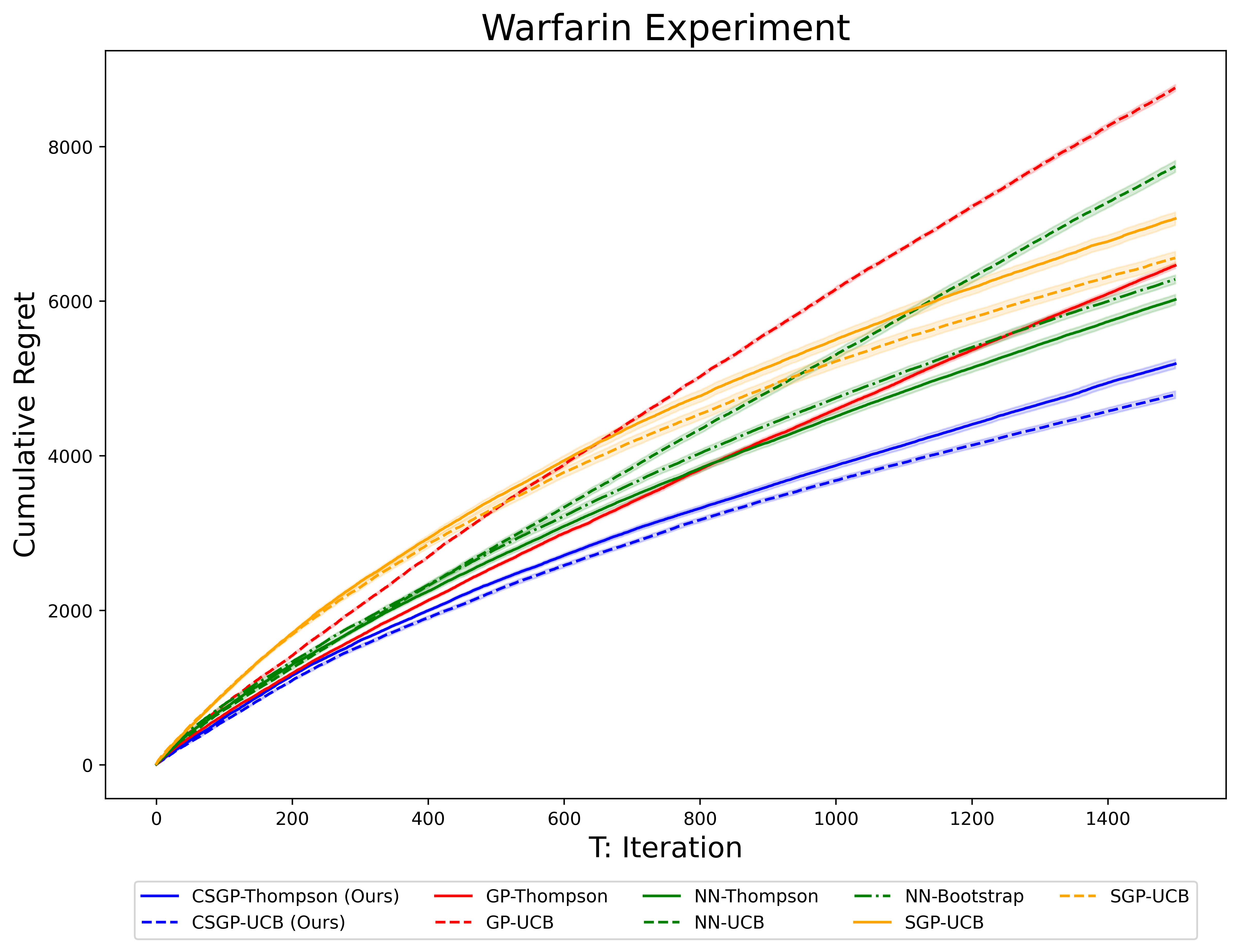}
    \caption{Cumulative regret for the Warfarin dosing test function experiment.  The proposed methods, CSGP-Thompson and CSGP-UCB,
    produced lower cumulative regret than competing methods.  
    }
    \label{fig:warfarin1_regret}
\end{figure}

\subsection{Warfarin Dosing Test Function}

In this section, we examine a test function used by 
\citet{chen2016personalized} to study optimal strategies for personalized 
dosing of Warfarin, an anti-blood clotting medication. Dose-response 
optimization is natural application of concavity information. Patient 
response likely monotonically improves as dosage approaches the optimal 
amount from below, and will likely monotonically deteriorate afterwards 
due to overdosing. Outcomes also significantly depend on patient context 
variables such as weight, age, and other patient biometrics.  
\citet{taleb2023working} reviews the prevalence of this concavity 
assumption in medicine. 

\citet{chen2016personalized} study the function
\begin{align*}
&f(a, \bm{x}) = 8 + 4 \cos(2\pi\bm{x}_2) - 2 \bm{x}_3 - 8 \bm{x}_5^3 -15 |g(\bm{x}) - 2a|^2 \\
&g(\bm{x}) = 0.6\mathbbm{1}(-0.5 \leq \bm{x}_1 \leq 0.5) + 1.2 \mathbbm{1}(\bm{x}_1 > 0.5) \\
&+ 1.2 \mathbbm{1}(\bm{x}_1 < -0.5) + \bm{x}_4^2 + 0.5 \log(|\bm{x}_7| + 1) - 0.6,
\end{align*}
where $\bm{x} \in [-1, 1]^7$, $a \in [0, 1]$. We define $\w{A}$ as a uniform grid in $[0, 1]$ with $|\w{A}| = 100$.  We initialize the data with $n= 25$ data points with contexts and actions drawn uniformly from the domain. We sample the noisy rewards from $y = f(a, \bm{x}) + \epsilon$ where $\epsilon \sim N(0, 0.1)$ as in \citet{chen2016personalized}.

This test function also demonstrates the flexibility of the CSGP model and robustness to mis-specification. Both CSGP and the GP model are mis-specified as $f(a, \bm{x})$ is not differentiable with respect to $a$ and discontinuous in $\bm{x}$.

Figure \ref{fig:warfarin1_regret} displays the mean cumulative regret 
after running the algorithms for $T = 1500$ rounds across $25$ 
replications along with 1 SE bars. We see that CSGP-UCB performs  
best, followed closely by CSGP-Thompson. We suspect that CSGP-UCB's use of only the posterior moments make it 
more robust to mis-specification than CSGP-Thompson. Again, we see that 
CSGP outperforms the unconstrained SGP model. We see that NN-Thompson 
and NN-Bootstrap perform the next best. We expect this result as the 
neural network is well specified for the non-continuous reward function.

\section{Discussion}
We proposed a contextual bandit algorithm for settings in which
the mean reward is concave in the action given context information. 
We encoded concavity information into a novel spline-based 
Gaussian Process model via truncation of the posterior 
distribution.  
Concentration properties of the resulting truncated posterior were used
to construct the first UCB-algorithm with regret guarantees under Bayesian assumptions and concavity information. In simulation
experiments and an
illustrative biomedical application our algorithm significantly outperformed state-of-the-art non-linear bandit algorithms when the concavity conditions hold. 

There are a number of interesting directions for future work. 
For example, it is an open question whether one can obtain regret guarantees under the 
Frequentist assumptions that the function belongs to some RKHS rather 
than being a draw from a GP prior. Such a bound would allow theoretical 
examination of the advantage of conditioning on concavity information. 
It would also be interesting to scale our algorithm using methods such 
as sparse Gaussian Processes \citep{titsias2009variational, 
hensman2013gaussian, vakili2021information}.

\section*{Impact Statement}
This research proposes a novel methodology for contextual bandits when one has the additional knowledge that the reward is concave in the action for each context.  Such structural information is common in applications where increasing action intensity exhibits diminishing returns.  For example, the proposed methodology has application in precision medicine where it can be used to identify an optimal personalized dose, as well as in e-commerce where it can be used to set auction reserve prices.     


\bibliography{example_paper}
\bibliographystyle{icml2025}

\newpage
\appendix
\onecolumn

\section{Proofs}
\label{sup:Proofs}
\subsection{Proof of Lemma \ref{lemma:trunc_lemma}}

\begin{proof}
We can write the posterior as 

\begin{align*}
p(\bs{\beta}_t| \bm{y}_{t-1}, \bs{\beta}_t \leq \nu) &= \frac{p(\bm{y}_{t-1}|\bs{\beta}_t)p(\bs{\beta}_t| \bs{\beta}_t \leq \nu) }{C_1} \\
&= \frac{p(\bm{y}_{t-1}|\bs{\beta}_t)p(\bs{\beta}_t) \mathbbm{1}(\bs{\beta}_t \leq \nu) }{C_1 C_2} \\
&\propto p(\bm{y}_{t-1}|\bs{\beta}_t)p(\bs{\beta}_t) \mathbbm{1}(\bs{\beta}_t \leq \nu) .
\end{align*}

Where $C_1 = \int_{-\bs{\infty}}^{\bs{\nu}} p(\bm{y}_{t-1}|\bs{\beta}_t)p(\bs{\beta}_t| \bs{\beta}_t \leq \nu) d \bs{\beta}_t$ and $C_2 = \int_{-\bs{\infty}}^{\bs{\nu}} p(\bs{\beta}_t) d \bs{\beta}_t$. Given our observational model $p(\bm{y}_{t-1}|\bs{\beta}_t)p(\bs{\beta}_t)$ is proportional to Multivariate Normal distribution (Both likelihood and prior are Multivariate Densities), giving us that the posterior in question is a truncated Multivariate Normal.
\end{proof}

\subsection{Proof of Lemma \ref{lemma:paper_subgauss}}

We first derive the moment generating function for linear transformations of upper truncated Multivariate Normal distributions. Without loss of generality we assume that $\bs{\mu}_t = 0$. Note that $\bs{\beta}_t - \bs{\mu}_t \sim N_{\bs{\nu} - \bs{\mu}_t}(0, \bs{\Sigma_t})$. So we study the generic upper truncated normal distribution $\bm{w} \sim N_{\bs{\nu}}(\bm{0}, \bs{\Sigma})$.

\begin{lemma}
\label{lemma:mgf}
Denote $N_{\bs{\nu}}(\bm{0}, \bs{\Sigma})$ as a $d$ dimensional Multivariate normal distribution with parameters ($\bm{0}$, $\bs{\Sigma}$) truncated above at vector $\bs{\nu}$. Suppose $\bm{w} \sim N_{\bs{\nu}}(\bm{0}, \bs{\Sigma})$. Then for $\bm{c} \in \mathbb{R}^d$

\begin{align*}
\mathbb{E}\left[ \exp\left( \lambda \bm{c}^T \bm{w} \right) \right] = \exp\left[ \frac{1}{2} \lambda^2 \bm{c}^T \bs{\Sigma} \bm{c}  \right] \frac{\bs{\Phi}_J(\bs{\nu} - \lambda \bs{\Sigma}\bm{c}, \bs{\Sigma})}{ \Phi_{J}(\bs{\nu}; \bs{\Sigma})} .
\end{align*}

Where $\Phi_{J}(\bs{\nu}; \bs{\Sigma}) = P(\bm{w}_1 \leq \bs{\nu}_1 \dots \bm{w}_J \leq \bs{\nu}_J)$. 
\end{lemma}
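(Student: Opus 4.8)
The plan is a direct moment-generating-function computation by completing the square, exactly as in the textbook derivation of the Gaussian MGF, but carrying the truncation region along. First I would make the density explicit: for $\bm{w}\sim N_{\bs{\nu}}(\bm{0},\bs{\Sigma})$ write
\begin{align*}
p(\bm{w}) \;=\; \frac{1}{\Phi_J(\bs{\nu};\bs{\Sigma})}\,\frac{1}{(2\pi)^{J/2}|\bs{\Sigma}|^{1/2}}\,\exp\!\left(-\tfrac12\bm{w}^T\bs{\Sigma}^{-1}\bm{w}\right)\mathbbm{1}(\bm{w}\le\bs{\nu}),
\end{align*}
the inequality $\bm{w}\le\bs{\nu}$ understood coordinatewise and $\Phi_J(\bs{\nu};\bs{\Sigma})=P(\bm{w}'\le\bs{\nu})$ for $\bm{w}'\sim N(\bm{0},\bs{\Sigma})$ the normalizing constant. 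Here I would assume without loss of generality that $\bs{\Sigma}$ is nonsingular, deferring the degenerate case to a limiting/block-diagonalization argument (see below); note that in the application the coordinates with $\nu_j=\infty$ are simply un-truncated, so $\Phi_J$ degenerates to the corresponding lower-dimensional orthant probability, and nothing changes.

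Next I would write out the expectation and complete the square. With $\bm{m}:=\lambda\bs{\Sigma}\bm{c}$, the elementary identity
\begin{align*}
\lambda\bm{c}^T\bm{w}-\tfrac12\bm{w}^T\bs{\Sigma}^{-1}\bm{w}
\;=\; -\tfrac12(\bm{w}-\bm{m})^T\bs{\Sigma}^{-1}(\bm{w}-\bm{m}) \;+\; \tfrac12\lambda^2\,\bm{c}^T\bs{\Sigma}\bm{c}
\end{align*}
(which uses $\bm{m}^T\bs{\Sigma}^{-1}\bm{w}=\lambda\bm{c}^T\bm{w}$ and $\tfrac12\bm{m}^T\bs{\Sigma}^{-1}\bm{m}=\tfrac12\lambda^2\bm{c}^T\bs{\Sigma}\bm{c}$) gives
\begin{align*}
\mathbb{E}\!\left[e^{\lambda\bm{c}^T\bm{w}}\right]
\;=\;\frac{\exp\!\left(\tfrac12\lambda^2\bm{c}^T\bs{\Sigma}\bm{c}\right)}{\Phi_J(\bs{\nu};\bs{\Sigma})}\int_{\{\bm{w}\le\bs{\nu}\}}\frac{\exp\!\left(-\tfrac12(\bm{w}-\bm{m})^T\bs{\Sigma}^{-1}(\bm{w}-\bm{m})\right)}{(2\pi)^{J/2}|\bs{\Sigma}|^{1/2}}\,d\bm{w}.
\end{align*}
The substitution $\bm{u}=\bm{w}-\bm{m}$ has unit Jacobian and carries the region $\{\bm{w}\le\bs{\nu}\}$ onto $\{\bm{u}\le\bs{\nu}-\bm{m}\}=\{\bm{u}\le\bs{\nu}-\lambda\bs{\Sigma}\bm{c}\}$, so the remaining integral is precisely $\Phi_J(\bs{\nu}-\lambda\bs{\Sigma}\bm{c};\bs{\Sigma})$. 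Dividing by the normalizing constant yields the claimed formula $\mathbb{E}[e^{\lambda\bm{c}^T\bm{w}}]=\exp(\tfrac12\lambda^2\bm{c}^T\bs{\Sigma}\bm{c})\,\Phi_J(\bs{\nu}-\lambda\bs{\Sigma}\bm{c};\bs{\Sigma})/\Phi_J(\bs{\nu};\bs{\Sigma})$.

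There is no deep obstacle; the one step that needs genuine care is the transformation of the truncation threshold, and in particular that the coordinates with $\nu_j=\infty$ remain $+\infty$ after subtracting $\lambda\bs{\Sigma}\bm{c}$ (this is fine since $(\bs{\Sigma}\bm{c})_j$ is finite), so the shifted region is indeed the defining region of $\Phi_J(\bs{\nu}-\lambda\bs{\Sigma}\bm{c};\bs{\Sigma})$. The secondary technical point is the invertibility of $\bs{\Sigma}$: if one does not wish to invoke continuity in $\bs{\Sigma}$, one can orthogonally diagonalize $\bs{\Sigma}$, observe that on any null direction $\bm{w}$ is degenerate (a point mass) and is shifted by the corresponding component of $\bm{m}$ in exactly the way the formula predicts, and run the completion of the square on the nondegenerate block. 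Finally, I would record that this identity is the engine behind Lemma~\ref{lemma:paper_subgauss}: substituting back $\bm{w}=\bs{\beta}_t-\bs{\mu}_t$, $\bm{c}=\bm{c}_t(a)$, one is left with controlling the ratio of orthant probabilities $\Phi_J(\bs{\nu}-\lambda\bs{\Sigma}_t\bm{c}_t(a);\bs{\Sigma}_t)/\Phi_J(\bs{\nu};\bs{\Sigma}_t)$, which by monotonicity of orthant probabilities in their thresholds does not exceed $1$ in the regime relevant to the upper-confidence bound, leaving the pure Gaussian factor $\exp(\tfrac12\lambda^2\sigma^2_{\bm{c}_t}(a,\bm{x}_t))$ as the sub-Gaussian bound with the stated variance proxy.
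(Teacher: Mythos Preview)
Your proof of the lemma is correct and follows essentially the same route as the paper: write out the truncated density, complete the square in the exponent with $\bm{m}=\lambda\bs{\Sigma}\bm{c}$, and recognize the remaining integral as the shifted orthant probability $\Phi_J(\bs{\nu}-\lambda\bs{\Sigma}\bm{c};\bs{\Sigma})$. Your extra care with the $\nu_j=\infty$ coordinates and the singular-$\bs{\Sigma}$ case is welcome but not needed for the paper's purposes.

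One caution on your closing remark about Lemma~\ref{lemma:paper_subgauss}: the monotonicity shortcut you sketch does not go through. The ratio $\Phi_J(\bs{\nu}-\lambda\bs{\Sigma}\bm{c};\bs{\Sigma})/\Phi_J(\bs{\nu};\bs{\Sigma})$ can exceed $1$ for $\lambda$ of the appropriate sign, and the sub-Gaussian inequality must hold for \emph{all} $\lambda\in\mathbb{R}$, not merely a favorable regime. The paper instead centers by the \emph{truncated} mean, so the residual term is $\log\Phi_J(\bs{\nu}-\lambda\bs{\Sigma}\bm{c};\bs{\Sigma})-\log\Phi_J(\bs{\nu};\bs{\Sigma})-\lambda\,\mathbb{E}[\bm{c}^T\bm{w}]$; this vanishes with zero derivative at $\lambda=0$, and log-concavity of $\lambda\mapsto\Phi_J(\bs{\nu}-\lambda\bs{\Sigma}\bm{c};\bs{\Sigma})$ (via Pr\'ekopa's theorem on marginals of log-concave densities) forces $\lambda=0$ to be the global maximizer, so the residual is non-positive for every $\lambda$. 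That, not threshold monotonicity, is the engine behind the variance-proxy claim.
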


\begin{proof}
\begin{align*}
\mathbb{E}\left[ \exp\left( \lambda \bm{c}^T \bm{w} \right) \right] &= \Phi_{J}(\bs{\nu}; \bs{\Sigma})^{-1}\int^{\bs{\nu}}_{-\bs{\infty}} \exp(\lambda c^t \bm{w}) p(\bm{w}) d \bm{w} = 
\\&= \Phi_{J}(\bs{\nu}; \bs{\Sigma})^{-1} (2\pi)^{-\frac{J}{2}} \bs{\Sigma}^{-\frac{1}{2}} \int^{\bs{\nu}}_{-\bs{\infty}} \exp\left[ -\frac{1}{2} \left( \bm{w}^T \bs{\Sigma}^{-1}\bm{w} - 2\lambda \bm{c}^T\bm{w} \right) \right] d\bm{w} \\
&= \Phi_{J}(\bs{\nu}; \bs{\Sigma})^{-1} (2\pi)^{-\frac{J}{2}} \bs{\Sigma}^{-\frac{1}{2}} \exp\left[ \lambda^2 \bm{c}^T \bs{\Sigma} \bm{c}  \right] \int^{\bs{\nu}}_{-\bs{\infty}} \exp\left[ -\frac{1}{2}\left(\bm{w} - \lambda \bs{\Sigma}\bs{c} \right)^T \bs{\Sigma}^{-1} \left( \bm{w} - \lambda \bs{\Sigma}\bs{c} \right) \right] d\bm{w} \\
&= \exp\left[ \frac{1}{2} \lambda^2 \bm{c}^T \bs{\Sigma} \bm{c}  \right] \frac{\bs{\Phi}_J(\bs{\nu} - \lambda \bs{\Sigma}\bm{c}, \bs{\Sigma})}{ \Phi_{J}(\bs{\nu}; \bs{\Sigma})} .
\end{align*}

\end{proof}


Now we can prove Lemma \ref{lemma:paper_subgauss} for  generic $\bm{w} \sim N_{\bs{\nu}}(\bm{0},\bs{\Sigma})$:
\begin{proof}
The log of the centered moment generating function for $\bm{c}^T \bm{w}$ is
\begin{align*} 
\log\left[ \mathbb{E}\left( \exp(\lambda \bm{c}^T \bm{w} - \lambda \mu_{\bs{\nu}} ) \right) \right] =  \frac{1}{2} \lambda^2 \bm{c}^T \bs{\Sigma} \bm{c} + \log\left[ \frac{\Phi_J(\bs{\nu} - \lambda \bs{\Sigma}\bm{c}, \bs{\Sigma})}{ \Phi_{J}(\bs{\nu}; \bs{\Sigma})}\right] - \frac{\lambda}{\Phi_{J}(\bs{\nu}; \bs{\Sigma})} \int^{\bs{\nu}}_{-\bs{\infty}} \bm{c}^T \bm{w} p(\bm{w}) d \bm{w} .
\end{align*}

$\bm{c}^T\bm{w}$ is clearly Sub-gaussian if the following terms are non-positive
\begin{align}
\label{eqn:is_zero}
\log\left[ \frac{\Phi_J(\bs{\nu} - \lambda \bs{\Sigma}\bm{c}, \bs{\Sigma})}{ \Phi_{J}(\bs{\nu}; \bs{\Sigma})}\right] - \frac{\lambda}{\Phi_{J}(\bs{\nu}; \bs{\Sigma})} \int^{\bs{\nu}}_{-\bs{\infty}} \bm{c}^T \bm{w} p(\bm{w}) d \bm{w}.
\end{align}

First the term in \ref{eqn:is_zero} is exactly zero when $\lambda = 0$. Furthermore:
\begin{align*}
\frac{d \log\left[ \bs{\Phi}_J(\bs{\nu} - \lambda \bs{\Sigma}\bm{c}, \bs{\Sigma}) \right]}{d \lambda} = \frac{1}{\Phi_{J}(\bs{\nu} - \lambda \bs{\Sigma} \bm{c}; \bs{\Sigma})} \int^{\bs{\nu}}_{-\bs{\infty}} (\bm{c}^T \bm{w} - \bm{c}^T \lambda \bs{\Sigma}^{-1} \bm{c}) p
(\bm{w}) d \bm{w},
\end{align*}
So that the derivative of (\ref{eqn:is_zero}) is zero when $\lambda = 0$. If we prove that $\bs{\Phi}_J(\bs{\nu} - \lambda \bs{\Sigma}\bm{c})$ is log-concave then 0 is the unique maximum of  (\ref{eqn:is_zero}). Recall that 

\begin{align*}
\bs{\Phi}_J(\bs{\nu} - \lambda \bs{\Sigma}\bm{c}) = \int^{\bs{\nu}}_{-\bs{\infty}} \exp\left[ -\frac{1}{2}\left(\bm{w} - \lambda \bs{\Sigma}\bs{c} \right)^T \bs{\Sigma}^{-1} \left( \bm{w} - \lambda \bs{\Sigma}\bs{c} \right) \right] d\bm{w}. 
\end{align*}

The Hessian of $\left(\bm{w} - \lambda \bs{\Sigma}\bs{c} \right)^T \bs{\Sigma}^{-1} \left( \bm{w} - \lambda \bs{\Sigma}\bs{c} \right)$ with respect to $\bm{w}, \lambda$ is 
\begin{align*}
H = 
\m{\Sigma^{-1} & -\bm{c} \\
  -\bm{c}^T & \bm{c}^T \bs{\Sigma} \bm{c}}
\end{align*}
We can show this Hessian is positive definite by showing it is a valid covariance matrix. Consider $\bm{v} \sim N(\bm{0}, \bs{\Sigma}^{-1})$. $H$ is the covariance matrix of the vector $(\bm{v}, -\bm{c}^T \bs{\Sigma} \bm{v})^T$. Therefore $\exp\left[ -\frac{1}{2}\left(\bm{w} - \lambda \bs{\Sigma}\bs{c} \right)^T \bs{\Sigma}^{-1} \left( \bm{w} - \lambda \bs{\Sigma}\bs{c} \right) \right]$ is log-concave. As truncation above by $\bs{\nu}$ maintains a convex integration region, marginalizing out $\bm{w}$ preserves log-concavity (Theorem 8 \citet{prekopanote, prekopa1971logarithmic}). Therefore (\ref{eqn:is_zero}) is non-positive and sub-gaussianaity follows.
\end{proof}

\subsection{Proof of Lemma \ref{lemma:information_gain}}

\begin{proof}

Recall that the covariance function on the GP induced on our function $f(a, \bm{x})$ takes form:
\begin{align}
&k_f\left( \left(a, \bm{x} \right), \left(a', \bm{x}' \right) \right) = \sum_{j= 1}^J \bs{\phi}_j(a) k_j(\bm{x}, \bm{x}')  \bs{\phi}_j(a') \notag .
\end{align}

This is a additive composistion of product kernels. The product kernels are product of a Gaussian kernel defined on contexts with a linear kernel defined on actions. We can use the following identity from \citet{krause2011contextual} to derive the maximum information gain $\gamma_T$ for the composition of kernels $k_S, k_Z$ defined on $S, Z$ respectively:

\begin{align*}
&\gamma(T, k_S \otimes k_Z, S \times Z) \leq  d \gamma(T, k_S, S) + d \log(T) \\
&\gamma(T, k_S \oplus k_Z, S \times Z) \leq \gamma(T, k_S, S) + \gamma(T, k_Z, Z) + 2 \log(T)
\end{align*}
Where $Z$ has at most dimension $d$. The identities hold with $S, Z$ reversed. \citet{srinivas2009gaussian} show that for the $d$ dimensional Gaussian $\gamma_T = \w{O}(\log(T)^d)$ and the $d$ dimensional Linear kernel is $\gamma_T = \w{O}(d\log(T))$. Putting these results together gives us the Lemma and that for a Gaussian Kernel kernel  on $k_j$ we have that for $k_f$:
\begin{align*}
\gamma(T, k_f, \times \w{A} \times \w{X}) = \w{O}(J \log(T)^{d + 1})
\end{align*}

Where $\w{X} \subset \w{R}^d$. 

\end{proof}

\subsection{Proof of Proposition \ref{prop:discrete_ucb}}

\subsubsection{ (i) Discrete Action Spaces}

We first prove two concentration lemmas that show that we can bound the deviation of the true function from the posterior mean via the variance of  GP posterior mean that is not conditioned on concavity information. 
\begin{lemma}
\label{lemma:prob_bound}
Let $\mu_{t}^{*}(a, \bm{x}_{t})$ be the posterior mean of the model conditioned on data and concavity information as defined in equation \ref{eqn:trunc_mean}. Let $\sigma^2_{\bm{c}_t}(a, \bm{x}_{t})$ be variance of the posterior that is not conditioned on concavity information as defined in equation \ref{eqn:vanilla_var}. Then

\begin{align*}
\pr{| f(a, \bm{x}_t) - \mu_{t}^{*}(a, \bm{x}_{t})| > \sigma_{\bm{c}_t}(a, \bm{x}_{t}) \alpha_{t}^{\frac{1}{2}}} \leq 2\exp\left(\frac{-\alpha_{t}}{2} \right).
\end{align*}

\end{lemma}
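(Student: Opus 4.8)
The plan is to reduce the statement to the Sub-Gaussian Chernoff inequality recalled in the background section, by identifying $f(a,\bm{x}_t)$ with a linear functional of the truncated-Gaussian posterior and then invoking Lemmas~\ref{lemma:trunc_lemma} and~\ref{lemma:paper_subgauss}. First I would recall that, conditional on $\bm{y}_{t-1}$ and the concavity events $\cap_{t'=1}^{t}\w{C}_{t'}$, Lemma~\ref{lemma:trunc_lemma} gives that $\bs{\beta}_t$ has the upper-truncated multivariate normal posterior $N_{\bs{\nu}}(\bs{\mu}_t, \bs{\Sigma}_t)$. Since $\bm{x}_t$ is fixed given the history and $\bm{c}_t(a)$ is a deterministic zero-padded vector built from the spline features $\bs{\phi}(a)$, the posterior law of $f(a,\bm{x}_t) = \bm{c}_t(a)^T\bs{\beta}_t$ is that of a deterministic linear functional of $\bs{\beta}_t$, and by linearity of expectation its posterior mean is exactly $\mu_t^{*}(a,\bm{x}_t)$ as defined in equation~\ref{eqn:trunc_mean}.

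Next I would apply Lemma~\ref{lemma:paper_subgauss} to conclude that $\bm{c}_t(a)^T(\bs{\beta}_t - \bs{\mu}_t)$ is Sub-Gaussian with variance proxy $\sigma^2_{\bm{c}_t}(a,\bm{x}_t) = \bm{c}_t(a)^T\bs{\Sigma}_t\bm{c}_t(a)$. Because subtracting the deterministic constant $\bm{c}_t(a)^T\bs{\mu}_t$ changes neither the Sub-Gaussian property nor the variance proxy, $f(a,\bm{x}_t) = \bm{c}_t(a)^T\bs{\beta}_t$ is itself Sub-Gaussian with variance proxy $\sigma^2_{\bm{c}_t}(a,\bm{x}_t)$; and since the centering in the definition of Sub-Gaussianity is always with respect to the variable's own mean, the relevant mean here is $\mu_t^{*}(a,\bm{x}_t)$, not $\bm{c}_t(a)^T\bs{\mu}_t$. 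Then I would invoke the Chernoff bound for Sub-Gaussian random variables with $X = f(a,\bm{x}_t)$, variance proxy $\sigma^2_{\bm{c}_t}(a,\bm{x}_t)$, and deviation level $v = \sigma_{\bm{c}_t}(a,\bm{x}_t)\,\alpha_t^{1/2}$; the factors of $\sigma^2_{\bm{c}_t}(a,\bm{x}_t)$ then cancel in the exponent, yielding
\begin{align*}
\pr{|f(a,\bm{x}_t) - \mu_t^{*}(a,\bm{x}_t)| > \sigma_{\bm{c}_t}(a,\bm{x}_t)\,\alpha_t^{1/2}} \;\le\; 2\exp\!\left(-\frac{\alpha_t}{2}\right),
\end{align*}
which is the claim.

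The only place requiring care — and what I would flag as the main (minor) obstacle — is the bookkeeping about which mean the concentration is centered at: Lemma~\ref{lemma:paper_subgauss} is phrased relative to the untruncated posterior mean $\bs{\mu}_t$, whereas the statement concerns deviations around the truncated posterior mean $\mu_t^{*}$. I would resolve this by noting that Sub-Gaussianity is invariant under deterministic shifts and that the Chernoff bound automatically measures deviations from the actual mean, so no extra term appears. I would also remark that all probabilities are conditional on $\bm{y}_{t-1}$ and $\cap_{t'=1}^{t}\w{C}_{t'}$, that the bound is uniform over this conditioning and hence holds unconditionally, and that it is pointwise in $a$ — precisely the form needed for the union bound over the finite action set $\w{A}$ used later in the proof of Proposition~\ref{prop:discrete_ucb}.
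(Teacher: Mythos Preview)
Your proposal is correct and follows essentially the same route as the paper: identify $f(a,\bm{x}_t)=\bm{c}_t(a)^T\bs{\beta}_t$, use Lemma~\ref{lemma:trunc_lemma} to get the truncated-normal posterior, invoke Lemma~\ref{lemma:paper_subgauss} for the Sub-Gaussian variance proxy $\sigma^2_{\bm{c}_t}(a,\bm{x}_t)$, and apply the Chernoff bound. The paper handles the centering issue you flagged by writing $\bs{\beta}_t - \mathbb{E}[\bs{\beta}_t\mid H_t] = (\bs{\beta}_t-\bs{\mu}_t) - (\mathbb{E}[\bs{\beta}_t\mid H_t]-\bs{\mu}_t)$ and noting the second term is a deterministic shift; your observation that Sub-Gaussianity is shift-invariant and that the Chernoff bound automatically centers at the true mean is exactly the same point, stated slightly more cleanly.
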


\begin{proof}

Define $\bm{c}_t(a) \in \mathbb{R}^{Jt}$ to be a vector where the last $J$ elements of $\bm{c}_t(a)$ are $\bs{\phi}(a)$ (defined in \ref{subsec:reward_model})  and the rest are zero. Let $M_{\bm{c}(a)^T \bs{\beta}_t}(\lambda)$ be the centered Moment Generating Function of $\bm{c}(a)^T \bs{\beta}_t$. By applying the two sided Chernoff inequality and the Subgaussian property of $ f(a, \bm{x}_t) = \bm{c}_t(a)^T \bs{\beta}_t$
\begin{align*}
\pr{| f(a, \bm{x}_t) - \mu_{t}^{*}(a, \bm{x}_{t})| >  \sigma_{\bm{c}_t}(a, \bm{x}_{t}) \alpha_{t}^{\frac{1}{2}} } &= \pr{|  \bm{c}_t(a)^T \left(\bs{\beta}_t - \mathbbm{E}\left[\bs{\beta}_t| \bm{y}_{t-1}, \cap_{t'=1}^{t-1} \w{C}_t \right] \right) | >  \sigma_{\bm{c}_t}(a, \bm{x}_{t})  \alpha_{t+1}^{\frac{1}{2}}} \\ 
&\leq 2 \exp\left[ \log \left(M_{\bm{c}(a)^T \bs{\beta}_t}(\lambda) \right)  - \lambda  \sigma_{\bm{c}_t}(a, \bm{x}_{t})  \alpha_{t}^{\frac{1}{2}} \right] \\
&\leq 2   \exp\left[ \frac{1}{2} \lambda^2  \sigma_{\bm{c}_t}(a, \bm{x}_{t})^2 - \lambda  \sigma_{\bm{c}_t}(a, \bm{x}_{t})  \alpha_t^{\frac{1}{2}}  \right] \\
& \leq 2\exp\left( -\frac{1}{2} \alpha_t \right)
\end{align*}

The first inequality follows from the fact that $\bs{\beta}_t$ has a truncated multivariate normal distribution of form $N_{\bs{\nu}}(\bs{\mu}_{t}, \bs{\Sigma}_t)$ and we can write:
\begin{align*}
\bs{\beta}_t - \mathbbm{E}\left[\bs{\beta}_t| \bm{y}_{t-1}, \cap_{t'=1}^T \w{C}_t \right]  = \left(\bs{\beta}_t - \bs{\mu}_{t} \right) -  \left(\mathbbm{E}\left[\bs{\beta}_t| \bm{y}_{t-1}, \cap_{t'=1}^{t-1} \w{C}_t \right]  - \bs{\mu}_t \right) .
\end{align*}

$\bs{\beta}(\bm{x})  - \bs{\mu}_{t}$ is distributed $N_{\bs{\nu} -\bs{\mu}_{t}}(\bm{0}, \bs{\Sigma})$. The second inequality is markov's inequality. We can then observe that the variance of the untruncated posterior for $\bm{c}_t(a)^T \bs{\beta}_t$ is simply $\sigma_{\bm{c}_t}(a, \bm{x}_{t})$ and the rest follows from the moment generating function derived from the Subgaussian property from lemma \ref{lemma:paper_subgauss}.
\end{proof}

\begin{lemma}
\label{lemma:prob_over}
Pick $\delta \in (0, 1)$. Set $\alpha_t = 2\log( 2|\w{A}|t^2 \pi^2/6 \delta)$. Then
\begin{align*}
\pr{\forall t, \forall a \in \w{A}, | f(a, \bm{x}_t) - \mu_t^{*}(a, \bm{x}_t)| \leq \sigma_{\bm{c}_t}(a_t, \bm{x}_t) \alpha_{t}^{\frac{1}{2}}} \geq 1 - \delta .
\end{align*}
\end{lemma}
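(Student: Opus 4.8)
The plan is to lift the pointwise concentration bound of Lemma \ref{lemma:prob_bound} to one that holds simultaneously over all rounds $t$ and all actions $a \in \w{A}$ by a standard double union bound, exactly in the spirit of \citet{srinivas2009gaussian, krause2011contextual}.

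First I would substitute the prescribed choice $\alpha_t = 2\log(2|\w{A}|t^2\pi^2/6\delta)$ into the conclusion of Lemma \ref{lemma:prob_bound}. Since $\alpha_t/2 = \log(2|\w{A}|t^2\pi^2/6\delta)$, this gives, for each fixed $t$ and each fixed $a \in \w{A}$,
\begin{align*}
\pr{| f(a, \bm{x}_t) - \mu_{t}^{*}(a, \bm{x}_{t})| > \sigma_{\bm{c}_t}(a, \bm{x}_{t}) \alpha_{t}^{1/2}} \;\leq\; 2\exp(-\alpha_t/2) \;=\; \frac{6\delta}{\pi^2 |\w{A}|\, t^2}.
\end{align*}
Here one should note that Lemma \ref{lemma:prob_bound} is really a statement about the posterior at round $t$: it holds conditionally on the history $\w{H}_{t-1}$ and the revealed context $\bm{x}_t$, which together determine the truncated-normal parameters $\bs{\mu}_t,\bs{\Sigma}_t$ and hence $\mu_t^{*}$ and $\sigma_{\bm{c}_t}$. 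Applying it conditionally and then taking expectation over $\w{H}_{t-1}$ is harmless because the right-hand side above is a deterministic constant.

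Second, I would union-bound over the $|\w{A}|$ actions: the event that the deviation inequality fails for \emph{some} $a\in\w{A}$ at round $t$ has probability at most $|\w{A}|\cdot \tfrac{6\delta}{\pi^2|\w{A}|t^2} = \tfrac{6\delta}{\pi^2 t^2}$. Third, I would union-bound over $t\ge 1$ and invoke $\sum_{t\ge 1} t^{-2} = \pi^2/6$, so that the probability that the inequality fails for some round and some action is at most $\sum_{t\ge 1}\tfrac{6\delta}{\pi^2 t^2} = \delta$; taking complements yields the claim. There is no genuine obstacle here — the argument is pure bookkeeping, and the slightly unusual denominators $2|\w{A}|t^2\pi^2/6$ inside $\alpha_t$ are reverse-engineered precisely so that the two summations collapse to exactly $\delta$. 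The one point that warrants a sentence of care is the adaptivity issue flagged above: $\bm{x}_t$, the chosen action $a_t$, and all posterior quantities depend on the adaptively generated history, which is why Lemma \ref{lemma:prob_bound} must be invoked conditionally on $\w{H}_{t-1}$ before the sums are taken.
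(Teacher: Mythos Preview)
Your proposal is correct and follows essentially the same approach as the paper's own proof: apply Lemma \ref{lemma:prob_bound} pointwise, union-bound over $a\in\w{A}$, then union-bound over $t\ge 1$ using $\sum_{t\ge1}t^{-2}=\pi^2/6$, with $\alpha_t$ chosen so that $2|\w{A}|\exp(-\alpha_t/2)=6\delta/(\pi^2 t^2)$. Your extra remark about invoking Lemma \ref{lemma:prob_bound} conditionally on the history to handle adaptivity is a welcome clarification that the paper leaves implicit.
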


\begin{proof}
We apply the Union bound over $a$ in the bound obtained in Lemma \ref{lemma:prob_bound} so that 
\begin{align*}
\pr{\forall a \in \w{A}, | f(a, \bm{x}_t) - \mu_t^{*}(a, \bm{x}_t)| \leq \sigma_{\bm{c}_t}(a_t, \bm{x}_t) \alpha_{t}^{\frac{1}{2}}} \geq 1 - 2 |\w{A}| \exp\left(-\frac{1}{2} \alpha_t \right)
\end{align*}
Taking a Union bound with respect to $t$, choosing $\alpha_t$ such that $2 |\w{A}| \exp\left( -\frac{\alpha_t}{2} \right) = \frac{6\delta}{\pi^2 t^2 }$, then the statement holds. 
\end{proof}


Once we obtain the concentration bounds for the truncated posterior $\bs{\beta}$ the proof of the discrete case follows directly follow from \citet{krause2011contextual}. 
\begin{proof}
 Our UCB probability bound in  Lemma \ref{lemma:prob_over} satisfies the conditions of  Theorem 5 and Lemma 4.1 in \citet{krause2011contextual}. Proof of the regret bound is identical conditional on our bound and the choice of $\alpha_t$. 
\end{proof}

\subsubsection{ (ii) Continuous Action Spaces}
\label{sup:ucb_continuous}

We first state the following lemma that adapts lemma 5.5 from \citep{srinivas2009gaussian} for our purposes:

\begin{lemma}
Pick $\delta \in (0, 1)$ and $\alpha_t = 2\log\left( 2\pi_t/\delta \right)$ where $\sum_{t \geq 1} \pi_t = 1, \pi_t > 0$ then

\begin{align*}
|f(a_t, \bm{x}_t) - \mu_t^{*}(a_t, \bm{x}_t)| \leq \sigma_{\bm{c}_t}(a_t, \bm{x}_t) \alpha_t^{\frac{1}{2}}  \quad \forall t \geq 1 , 
\end{align*}
holds with probability $1 - \delta$.
\end{lemma}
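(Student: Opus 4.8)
The plan is to obtain this bound as the continuous-action analogue of Lemma~5.5 of \citet{srinivas2009gaussian}, reusing the Sub-Gaussian concentration already established in Lemma~\ref{lemma:prob_bound}. The crucial observation is that, unlike the discrete case treated in Lemma~\ref{lemma:prob_over}, we do \emph{not} union-bound over all $a\in\w{A}$: we only need the tail inequality to hold at the single action $a_t$ that CSGP-UCB actually plays at round $t$. This is legitimate because $a_t=\mathrm{argmax}_{a\in\w{A}}\;\mu^{*}_t(a,\bm{x}_t)+\alpha_t^{1/2}\sigma_{\bm{c}_t}(a,\bm{x}_t)$ is a deterministic function of $\bm{y}_{t-1}$, of $\bm{x}_1,\dots,\bm{x}_t$, and of the concavity events $\cap_{t'=1}^{t}\w{C}_{t'}$ (the argmax is attained since $\w{A}\subset[0,1]$ is compact and both $\mu^{*}_t(\cdot,\bm{x}_t)$ and $\sigma_{\bm{c}_t}(\cdot,\bm{x}_t)$ are continuous in $a$ through the C-spline features). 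Hence, under the posterior measure $p\left(\cdot\mid \bm{y}_{t-1},\cap_{t'=1}^{t}\w{C}_{t'}\right)$ that underlies Lemma~\ref{lemma:prob_bound}, the vector $\bm{c}_t(a_t)$ is a constant, so Lemma~\ref{lemma:prob_bound} applies verbatim with $a=a_t$.

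Concretely, first I would fix $t\ge1$ and invoke Lemma~\ref{lemma:prob_bound} at $a=a_t$ to get
\[
\pr{|f(a_t,\bm{x}_t)-\mu^{*}_t(a_t,\bm{x}_t)|>\sigma_{\bm{c}_t}(a_t,\bm{x}_t)\,\alpha_t^{1/2}}\le 2\exp\!\left(-\tfrac{1}{2}\alpha_t\right),
\]
which holds first under the time-$t$ posterior and then unconditionally after averaging over the history. Next I would plug in the prescribed schedule $\alpha_t=2\log(2\pi_t/\delta)$, chosen precisely so that $2\exp(-\alpha_t/2)=\delta\pi_t$, and take a union bound over all rounds:
\[
\pr{\exists\, t\ge1:\ |f(a_t,\bm{x}_t)-\mu^{*}_t(a_t,\bm{x}_t)|>\sigma_{\bm{c}_t}(a_t,\bm{x}_t)\,\alpha_t^{1/2}}\le\sum_{t\ge1}\delta\pi_t=\delta,
\]
using $\sum_{t\ge1}\pi_t=1$. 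Passing to the complement gives the stated event with probability at least $1-\delta$.

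The step I expect to require the most care is the measurability argument sketched above: one must be precise that $a_t$, though random, is measurable with respect to exactly the $\sigma$-field on which the posterior of Lemma~\ref{lemma:prob_bound} is conditioned, so that the Sub-Gaussian tail bound — stated for a \emph{fixed} coefficient vector $\bm{c}_t(a)$ via Lemma~\ref{lemma:paper_subgauss} — transfers to the data-dependent vector $\bm{c}_t(a_t)$. Everything else is the same bookkeeping as in \citet{srinivas2009gaussian}. I would also note that this lemma is only the first ingredient of the continuous-action regret proof: it must subsequently be combined with the sample-path gradient tail assumption on $k_f$ stated in Proposition~\ref{prop:discrete_ucb} to construct, for each $t$, a finite discretization of $\w{A}$ whose resolution grows with $t$ (as in Lemmas~5.6--5.8 of \citet{srinivas2009gaussian} and the analogous argument in \citet{krause2011contextual}), after which the $\w{O}^{*}(\sqrt{T\gamma_T\alpha_T})$ bound follows exactly as in the discrete case; that part is deferred to the remainder of this appendix.
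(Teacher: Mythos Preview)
Your approach is essentially identical to the paper's: apply the pointwise concentration of Lemma~\ref{lemma:prob_bound} at the single played action $a_t$ (legitimate because $a_t$ is measurable with respect to the conditioning $\sigma$-field, a point you make explicit and the paper leaves implicit), then union-bound over $t\in\mathbb{N}$. One arithmetic slip to fix: with $\alpha_t=2\log(2\pi_t/\delta)$ one obtains $2\exp(-\alpha_t/2)=\delta/\pi_t$, not $\delta\pi_t$, so the union bound actually needs $\sum_{t\ge1}\pi_t^{-1}=1$; this is the convention used in the very next lemma of the appendix and in Lemma~5.5 of \citet{srinivas2009gaussian}, so the hypothesis $\sum_t\pi_t=1$ in the statement appears to be a typo rather than something your argument should match.
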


\begin{proof}
Fix $t \geq 1$ and $a \in \w{A}, \bm{x} \in \w{X}$. Lemma \ref{lemma:prob_bound} already showed that conditioning on the action/contexts, rewards, and concavity observed before taking action time $t$ :
\begin{align*}
\pr{| f(a, \bm{x}_t) - \mu_{t}^{*}(a, \bm{x}_{t})| > \sigma_{\bm{c}_t}(a, \bm{x}_{t}) \alpha_{t}^{\frac{1}{2}}} \leq 2\exp\left(\frac{-\alpha_{t}}{2} \right)
\end{align*}
We have that $2 \exp\left(-\frac{\alpha_t}{2}\right) = \frac{\delta}{\pi_t}$ so we can take the union bound for $t \in \mathbb{N}$ to prove the statement.
\end{proof}

To prove regret bounds for the continous case we will make use of an increasingly fine set of discretization $\w{A}_t \in [0, 1]$. We state the following lemma regarding confident sets for $a \in \w{A}_t$
\begin{lemma}
\label{lemma:cont_bound}
Fix $\delta \in (0, 1)$ and $\alpha_t = 2 \log(2|A_t| \pi_t/\delta)$ where $\sum_{t \geq 1} \pi_t^{-1} =1, \pi_t > 0$ then 

\begin{align*}
|f(a, \bm{x}_t) - \mu_t^{*}(a, \bm{x}_t)| \leq \sigma_{\bm{c}_t}(a_t, \bm{x}_t) \alpha_t^{\frac{1}{2}}, \ \forall a \in A_t, \ \forall t \geq 1
\end{align*}

\begin{proof}
The proof is identical to that of lemma \ref{lemma:prob_over}, with a slightly modified choice of $\alpha_t = 2\log(2\pi_t/\delta)$ and now we use $|A_t|$.
\end{proof}

\end{lemma}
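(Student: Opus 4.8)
The plan is to follow the template of Lemma~\ref{lemma:prob_over} almost verbatim, with the single change that the union bound over actions is taken over the fixed finite grid $A_t$ instead of over all of $\w{A}$, and $\alpha_t$ is inflated by $\log|A_t|$ rather than $\log|\w{A}|$.

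First I would invoke the pointwise concentration inequality of Lemma~\ref{lemma:prob_bound}: for any fixed action $a$ and any fixed round $t$, conditional on everything observed before round $t$ (contexts, actions, rewards, and the concavity events $\cap_{t'\le t}\w{C}_{t'}$), the quantity $f(a,\bm{x}_t)=\bm{c}_t(a)^T\bs{\beta}_t$ is a linear functional of the truncated-Gaussian posterior vector $\bs{\beta}_t$ and is therefore centered-Sub-Gaussian with variance proxy $\sigma_{\bm{c}_t}^2(a,\bm{x}_t)$ by Lemma~\ref{lemma:paper_subgauss}; hence
\[
\Pr\{\,|f(a,\bm{x}_t)-\mu_t^*(a,\bm{x}_t)|>\sigma_{\bm{c}_t}(a,\bm{x}_t)\,\alpha_t^{1/2}\,\}\le 2\exp(-\alpha_t/2).
\]
Second, since $A_t$ is finite, a union bound over $a\in A_t$ bounds the round-$t$ failure probability by $2|A_t|\exp(-\alpha_t/2)$. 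Third, the choice $\alpha_t=2\log(2|A_t|\pi_t/\delta)$ collapses this to $\delta/\pi_t$, and a final union bound over $t\ge1$ gives total failure probability at most $\sum_{t\ge1}\delta/\pi_t=\delta$ because $\sum_{t\ge1}\pi_t^{-1}=1$; taking complements yields the stated uniform bound.

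The only subtlety worth flagging is the conditioning structure: $\mu_t^*$, $\sigma_{\bm{c}_t}$, and possibly the discretization $A_t$ all depend on the round-$t$ history, so the bound of Lemma~\ref{lemma:prob_bound} must be read conditionally on that history. Because the right-hand side $2\exp(-\alpha_t/2)$ is uniform over all histories and $A_t$ is taken to be a deterministic (history-independent) mesh of $[0,1]$, one may integrate over the history and then apply the two union bounds without trouble, so I do not expect a real obstacle here — this lemma is merely the discretized analogue of Lemma~\ref{lemma:prob_over}. The genuinely new work in the continuous-action case happens after this lemma, where this confidence set on $A_t$ must be combined with the sample-path gradient tail assumption of Proposition~\ref{prop:discrete_ucb} to control the discretization error $|f(a,\bm{x}_t)-f([a]_{A_t},\bm{x}_t)|$ between an arbitrary $a\in\w{A}$ and its nearest grid point $[a]_{A_t}$, after which the per-round regret is summed exactly as in \citet{srinivas2009gaussian,krause2011contextual}.
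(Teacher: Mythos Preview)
Your proposal is correct and follows exactly the paper's approach: invoke the pointwise Sub-Gaussian concentration from Lemma~\ref{lemma:prob_bound}, union-bound over the finite grid $A_t$, then union-bound over $t$ using the summable sequence $\pi_t^{-1}$. The paper's own proof is simply a one-line pointer back to Lemma~\ref{lemma:prob_over} with $|A_t|$ in place of $|\w{A}|$ and $\pi_t$ in place of $\pi^2 t^2/6$, which is precisely what you wrote out.
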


We also state an abridged result from \citet{ghosal2006posterior} that gives conditions for the assumptions on the partial derivatives of samples from $f$:

\begin{proposition}[\citet{ghosal2006posterior}]
    \label{prop:ghosal_roy}
    Suppose $f(a, \bm{x})$ is a GP with a stationary covariance function that has 4 mixed partial derivatives. Then  the sample paths of $f(a, \bm{x})$ are differentiable almost surely and there exists constants $\eta, \zeta$ such that 
    \begin{align*}
    \Pr \{ \sup_{\bm{v} \in \w{A} \times \w{X}} |\partial f/ \partial \bm{v}_j| > L \} \leq \eta \exp(-(L/\zeta)^2)
    \end{align*}
\end{proposition}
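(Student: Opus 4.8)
\begin{proofsketch}
This is essentially the differentiability-and-maximal-inequality statement of \citet{ghosal2006posterior}, so the plan is to recall why it holds rather than to reprove it from scratch. The guiding fact is that a partial derivative of a Gaussian process is again a Gaussian process: when $k_f$ has four mixed partial derivatives, $f(a,\bm{x})$ is mean-square differentiable in each coordinate $\bm{v}_j$ of $\bm{v}=(a,\bm{x})\in\w{A}\times\w{X}$, and the derivative field $g_j(\bm{v}):=\partial f(\bm{v})/\partial\bm{v}_j$ is a Gaussian process with mean $\partial\mu_f/\partial\bm{v}_j$ (a bounded, smooth, deterministic function on the compact domain) and covariance $\partial^2 k_f\{\bm{v},\bm{v}'\}/\partial\bm{v}_j\,\partial\bm{v}'_j$. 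It then suffices to (i) upgrade mean-square differentiability to almost-sure sample-path differentiability, and (ii) establish a Gaussian tail bound for $\sup_{\bm{v}}|g_j(\bm{v})|$ over the compact set $\w{A}\times\w{X}$, uniformly over the finitely many indices $j\in\{1,\dots,d+1\}$.

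For (i), one applies a Kolmogorov--Chentsov continuity criterion to the difference quotients $Q_h(\bm{v}):=h^{-1}\{f(\bm{v}+h\bm{e}_j)-f(\bm{v})\}$: smoothness of $k_f$ gives $\mathbb{E}|Q_h(\bm{v})-g_j(\bm{v})|^2=\w{O}(h)$, and jointly with Gaussianity (hence control of all moments of increments) this produces a continuous modification of $g_j$ that is genuinely the sample-path derivative of a modification of $f$. For (ii), compactness of $\w{A}\times\w{X}$ together with smoothness of $k_f$ makes the canonical pseudo-metric of $g_j$ Lipschitz-dominated by the Euclidean metric, so its Dudley entropy integral is finite and $m_j:=\mathbb{E}[\sup_{\bm{v}}g_j(\bm{v})]<\infty$; the Borell--TIS inequality then gives $\Pr\{\sup_{\bm{v}}g_j(\bm{v})-m_j>u\}\le\exp(-u^2/(2 s_j^2))$ with $s_j^2:=\sup_{\bm{v}}\mathrm{Var}\,g_j(\bm{v})<\infty$, and applying the same bound to $-g_j$ (whose sup-mean $m_j'$ is likewise finite) controls the infimum. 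For $L$ large enough that $L-\max(m_j,m_j')\ge L/2$ this yields $\Pr\{\sup_{\bm{v}}|g_j(\bm{v})|>L\}\le 2\exp(-L^2/(8 s_j^2))$, while for small $L$ the probability is trivially at most $1$; choosing $\zeta^2:=8\max_j s_j^2$ and $\eta$ large enough to absorb the small-$L$ regime uniformly over $j$ gives the claimed bound $\Pr\{\sup_{\bm{v}}|\partial f/\partial\bm{v}_j|>L\}\le\eta\exp(-(L/\zeta)^2)$ for all $j$.

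The one genuine subtlety is step (i): mean-square differentiability does not by itself imply sample-path differentiability, and one must verify that the modification produced by the continuity criterion is actually the derivative of a modification of $f$ and not merely a process with the right finite-dimensional laws. Since we need only the qualitative tail in the displayed form --- not sharp constants --- this is exactly what \citet{ghosal2006posterior} provide, so we invoke their result directly. This proposition then closes the continuous-action analysis: combined with the $\sigma_{\bm{c}_t}$-concentration of the constrained posterior mean and the increasingly fine discretizations $\w{A}_t$ of Lemma \ref{lemma:cont_bound}, the derivative tail bound controls the discretization error $|f(a,\bm{x}_t)-f([a]_t,\bm{x}_t)|$ at the nearest grid point $[a]_t\in\w{A}_t$, and reduces the continuous-case regret bound to the discrete one along the lines of \citet{srinivas2009gaussian}.
\end{proofsketch}
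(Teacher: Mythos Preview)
Your sketch is correct, but you should know that the paper does not prove this proposition at all: it is stated purely as a citation from \citet{ghosal2006posterior} (``We also state an abridged result from \citet{ghosal2006posterior}\ldots''), with no accompanying argument. Your write-up therefore goes well beyond what the paper offers, supplying the standard ingredients---mean-square differentiability via the fourth-order smoothness of $k_f$, a Kolmogorov--Chentsov upgrade to almost-sure sample-path differentiability, and a Borell--TIS/Dudley argument for the Gaussian tail of $\sup_{\bm v}|\partial f/\partial\bm v_j|$ on a compact domain---that underlie the cited result. Since you, like the paper, ultimately defer to \citet{ghosal2006posterior} for the full details, there is no substantive disagreement; your version simply makes explicit why four derivatives on the covariance suffice and how the constants $\eta,\zeta$ arise.
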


 The additive components of the CSGP prior are formed of a product between a polynomial and kernel $k_j(x, x')$. Therefore the conditions of the proposition holds if $k_j$ are Matern Kernels with $\nu > 2$ or a Gaussian Kernel.

Our choice of discretizations $A_t$ will closely follow the analysis in  \citet{srinivas2009gaussian,krause2011contextual} where $A_t \subset [0, 1]$ serves as a sequence of discretizations that we will use to obtain confidence bounds in continuous space. By the assumption on our covariance function (Gaussian or Matern with $\nu \geq 2$ we have that:
\begin{align*}
\pr{ \forall a \in \w{A},  \left\lvert \frac{\partial f} {\partial a} \right \rvert < L } \geq 1 - \eta \exp\left(-\frac{L^2}{\zeta^2} \right),
\end{align*}
which by lipshitz continuity implies that with probability greater than $1 - \eta \exp\left(-\frac{L^2}{\zeta^2} \right)$:

\begin{align}
\label{eqn:lipschitz}
\forall a, a' \in \w{A}, \ \forall \bm{x} \in \w{X} 
|f(a, \bm{x}) - f(a', \bm{x})| \leq L||a - a'||_{1} .
\end{align}

 As the contexts between the two actions are the same, the context space is irrelevant for constructing an appropriate discretization. Therefore we can pick a uniform discretization of $\w{A} \subset [0, 1]$ of size $\tau_t$ so that for $a \in \w{A}$:

 \begin{align*}
||a - [a]_t||_{1} \leq 1/\tau_t,
 \end{align*}

where $[a]_t$ is the closest point in the time $t$ discretization to the point $a$. Now we prove a bridging lemma that allows us to directly use results from \citep{krause2011contextual} to prove our regret bounds:

\begin{lemma}
Fix $\delta \in (0, 1)$ and pick $\alpha_t = 2\log(4 \pi_t/\delta) + 2\log(t^2 \zeta \sqrt{\log(2 \eta/ \delta)})$ where $\sum_{t \geq 1} \pi_t^{-1}$, $\pi_t \geq 0$. Let $\tau_t = t^2 \zeta \sqrt{\log\left( 2(d +1) \eta / \delta \right)}$. Let $[a_t]_t$ denote the closest point in $A_t$ to $a$ Then 

\begin{align*}
\left\lvert f(a_t, \bm{x}_t) - \mu_{t}^{*}\left([a_t]_t, \bm{x}_t \right) \right\rvert \leq \alpha_t^{\frac{1}{2}} \sigma_{\bm{c}_t}([a]_t, \bm{x}_t) + \frac{1}{t^2} \quad \forall t \geq 1, 
\end{align*}
holds with at least probability $1 - \delta$
\end{lemma}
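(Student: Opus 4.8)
The plan is to run the standard discretization argument of \citet{srinivas2009gaussian, krause2011contextual}: split the failure budget $\delta$ into one half controlling the smoothness of the sample path $a \mapsto f(a,\bm{x}_t)$ and one half controlling the concentration of the concavity-conditioned posterior mean on the finite grid $A_t$, then bridge between $a_t$ and its nearest grid point $[a_t]_t$ by the triangle inequality.

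First I would invoke Proposition~\ref{prop:ghosal_roy}: because each additive component of the CSGP kernel is a polynomial in $a$ times a kernel $k_j$ that is Gaussian or Mat\'ern with $\nu>2$, $k_f$ has the required mixed partial derivatives, so the sample paths of $f$ are a.s.\ differentiable and the tail bound $\Pr\{\sup_{\bm{v}}|\partial f/\partial\bm{v}_j| > L\} \le \eta\exp(-(L/\zeta)^2)$ holds for every coordinate. Taking $L = \zeta\sqrt{\log(2(d+1)\eta/\delta)}$ makes $\eta\exp(-(L/\zeta)^2) \le \delta/(2(d+1))$, and a union bound over the $d+1$ coordinates (only the action coordinate is really needed, since $[a_t]_t$ and $a_t$ share the context $\bm{x}_t$) gives, with probability at least $1-\delta/2$, the Lipschitz property \eqref{eqn:lipschitz}. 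This yields $|f(a_t,\bm{x}_t) - f([a_t]_t,\bm{x}_t)| \le L\,\|a_t - [a_t]_t\|_1 \le L/\tau_t$, and the grid size is chosen precisely as $\tau_t = t^2\zeta\sqrt{\log(2(d+1)\eta/\delta)} = t^2 L$ so that this discretization error equals exactly $1/t^2$.

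Next I would apply Lemma~\ref{lemma:cont_bound} with the grid $A_t$ of cardinality $|A_t| = \tau_t$ and with $\delta/2$ in place of $\delta$, which (for $\pi_t$ with $\sum_{t\ge1}\pi_t^{-1}=1$, e.g.\ $\pi_t = \pi^2 t^2/6$) gives, with probability at least $1-\delta/2$, $|f(a,\bm{x}_t) - \mu_t^{*}(a,\bm{x}_t)| \le \alpha_t^{1/2}\sigma_{\bm{c}_t}(a,\bm{x}_t)$ for all $a\in A_t$ and all $t\ge1$, provided $\alpha_t = 2\log\bigl(2|A_t|\pi_t/(\delta/2)\bigr) = 2\log(4\pi_t/\delta) + 2\log\tau_t$, matching the stated $\alpha_t$ up to the harmless $(d+1)$ factor inside the inner logarithm. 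Specializing to $a = [a_t]_t \in A_t$ and intersecting with the Lipschitz event (total failure probability at most $\delta$), the triangle inequality gives
\begin{align*}
|f(a_t,\bm{x}_t) - \mu_t^{*}([a_t]_t,\bm{x}_t)| &\le |f(a_t,\bm{x}_t) - f([a_t]_t,\bm{x}_t)| + |f([a_t]_t,\bm{x}_t) - \mu_t^{*}([a_t]_t,\bm{x}_t)| \\
&\le \tfrac{1}{t^2} + \alpha_t^{1/2}\,\sigma_{\bm{c}_t}([a_t]_t,\bm{x}_t),
\end{align*}
which is the claim.

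Every step here is essentially bookkeeping once Proposition~\ref{prop:ghosal_roy} and Lemma~\ref{lemma:cont_bound} are in hand; the only point requiring care is coordinating the allocation of $\delta/2$ to each of the two events and choosing $\tau_t$ so that the Lipschitz error collapses to exactly $1/t^2$ while the $\log|A_t| = \log\tau_t$ term it forces into $\alpha_t$ grows only like $O(\log t)$ — of the same order as the other terms in $\alpha_t$ — and hence does not degrade the eventual $\w{O}^{*}(\sqrt{T\gamma_T\alpha_T})$ rate. I would therefore regard the choice of $\tau_t$ and the probability split as the crux, and treat the rest as routine.
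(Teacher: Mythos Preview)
Your proof is correct and follows essentially the same route as the paper's: split $\delta$ in half, use the sample-path derivative tail bound to get a Lipschitz constant and choose $\tau_t$ so the discretization error is $1/t^2$, apply Lemma~\ref{lemma:cont_bound} on the grid with the remaining $\delta/2$, and finish by the triangle inequality. The only cosmetic difference is that the paper's proof invokes the derivative bound for the single action coordinate (giving $L=\zeta\sqrt{\log(2\eta/\delta)}$ and $\tau_t$ without the $(d{+}1)$ factor), whereas you union-bound over all $d{+}1$ coordinates to match the $\tau_t$ in the statement --- a discrepancy you already flagged as harmless, and which indeed does not affect the argument.
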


\begin{proof}
Picking $L$ in equation \ref{eqn:lipschitz} such that $\eta \exp(-L^2/\gamma^2) = \delta/2$ we have that with at least probability $1 - \delta/2$:
\begin{align*}
\forall  a, a' \in \w{A} \quad  |f(a, \bm{x}_t) - f(a', \bm{x}_t)| \leq \zeta \sqrt{\log\left( 2 \eta /\delta \right)}||a - a'||_{1}.
\end{align*}
Therefore we have the following bound for the maximum distance between actions from their nearest point in the discretization
\begin{align*}
\forall a \in \w{A} ,\ \bm{x} \in \w{X} \quad  |f(a, \bm{x}) - f([a]_t, \bm{x})| \leq \zeta \sqrt{\log\left( 2 \eta /\delta \right)}/\tau_t.
\end{align*}
Choose $\tau_t = t^2 \zeta \sqrt{\log\left( 2 \eta /\delta \right)} $. Then 
\begin{align*}
\forall a \in \w{A} \quad |f(a, \bm{x}_t) - f([a]_t, \bm{x}_t)| \leq \frac{1}{t^2}.
\end{align*}
Now use $\delta/2$ in lemma \ref{lemma:cont_bound} to obtain bounds for $f([a]_t, \bm{x}_t)$
\begin{align*}
|f([a]_t, \bm{x}_t) - \mu_t^{*}([a]_t, \bm{x}_t)| \leq \sigma_{\bm{c}_t}([a]_t, \bm{x}_t) \alpha_t^{\frac{1}{2}}, \ \forall a \in A_t, \ \forall t \geq 1 .
\end{align*}
with at least probability $1 - \delta/2$. Now applying the triangle inequality and the union bound we obtain the lemma.
\end{proof}

The rest of the proof results from plugging the above bound into Lemma 5.4 of \citep{krause2011contextual} and choosing $\pi_t = \pi^2 t^2/6$.

\section{Thompson Sampling}
\label{sup:thompson}
 The UCB algorithm only exploits the mean of the concavity conditioned posterior. The variance of the vanilla GP posterior may be conservatively large. To investigate this issue, we propose to use Thompson sampling where we sample function evaluations $\{f^{*}(a_i, \bm{x}_t)\}_{i=1}^{|\w{A}|}$ jointly from the posterior $p(f(a, \bm{x}_t) | \bm{y}_{t-1}, \bigcap_{t'=1}^t \w{C}_t )$. This sampling can be efficiently done via Monte-Carlo methods such as an elliptical slice sampler \citep{murray2010elliptical,wu2024fast} that is known to have fast convergence \citep{natarovskii2021geometric}. Thompson sampling select as the time $t$ action:
\begin{align*}
a_t = \mbox{argmax}_{a \in \w{A}} f^{*}(a, \bm{x}_t)
\end{align*}


Bayesian regret is the expected regret incurred when averaged over our prior for $f(\cdot, \cdot)$. We use the regret analysis proposed by \citet{russo2014learning} which uses a regret decomposition of upper confidence bounds to derive Bayesian Regret. The upper confidence bounds are complete artifacts of regret analysis and are not used at all in the actual algorithm's implementation. The following proposition gives the Bayesian Regret of Thompson Sampling under our model:

\begin{proposition}
\label{prop:bayes_regret}
Let the expected reward function $f$ be a sample from a known CSGP prior defined in equations \ref{eqn:function_model} and \ref{eqn:f_gp} with noise model $N(0, \sigma^2)$. Also suppose that at each time step $t$ the bandit algorithm receives noisy reward $y_t$ as well as concavity information that $\bs{\beta}_j(\bm{x}_{t'}) \leq 0$ for $j = 1 \dots J- 2, t' \leq t$. Then we have for each of the following wettings:

\begin{itemize}
    \item If $\w{A}$ is finite with  $\alpha_t = 2 \log\left( (t^2 + 1)|\w{A}|\right)$ then the Bayes regret of the CSGP-TS algorithm has the following bound
    \begin{align*}
    \mathbb{E}\left[ R_T \right] \leq C_{\sigma} + 2\sqrt{T \gamma_T \log(1 + \sigma^{-2}) \alpha_T}
    \end{align*}
    
    \item Suppose $\w{A} \subset [0, 1]$ is compact and convex and the CSGP kernel $k_f(\cdot, \cdot)$ satisfies the following bound on the GP sample paths. For some constants $\eta, \zeta$
    \begin{align*}
    \Pr \{ \sup_{\bm{v} \in \w{A} \times \w{X}} |\partial f/ \partial \bm{v}_j| > L \} \leq \eta \exp\left\lbrace 
    -(L/\zeta)^2\right\rbrace.
    \end{align*}
    And we choose 
    \begin{align*}
    \alpha_t = 2 \log\left[ (t^2 + 1)\right] + 2\log \left[ t^2 \eta \zeta \sqrt{\pi} \right].
    \end{align*} Then the Bayes Regret of the CSGP-TS algorithm has the following bound
    \begin{align*}
    \mathbb{E}\left[ R_T \right] \leq C_{\sigma} + 2\sqrt{T \gamma_T \log(1 + \sigma^{-2}) \alpha_T} + \frac{\pi^2}{6}
    \end{align*}

Where $C_{\sigma} = \sum_{j=1}^J k_j\left((a, \bm{x}), (a, \bm{x})\right)$.
\end{itemize}
\end{proposition}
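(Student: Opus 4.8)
The plan is to derive the bound from the upper-confidence-bound decomposition of Bayes regret due to \citet{russo2014learning}, using the confidence widths already shown valid for the CSGP posterior. Introduce the purely analytical sequence $U_t(a) := \mu_t^{*}(a,\bm{x}_t) + \alpha_t^{1/2}\sigma_{\bm{c}_t}(a,\bm{x}_t)$ built from the concavity-conditioned posterior mean of Equation \ref{eqn:trunc_mean} and the \emph{unconditioned} CSGP posterior standard deviation of Equation \ref{eqn:vanilla_var}; this is the quantity CSGP-UCB would use, but here it is never executed. Writing $a_t^{\star} := \mathrm{argmax}_{a\in\w{A}} f(a,\bm{x}_t)$ and letting $\w{H}_{t-1}$ denote the history of actions, rewards, and contexts through round $t-1$ together with $\bm{x}_t$ and the concavity events $\cap_{t'\le t}\w{C}_{t'}$, decompose
\begin{align*}
r_t = \big(f(a_t^{\star},\bm{x}_t) - U_t(a_t^{\star})\big) + \big(U_t(a_t^{\star}) - U_t(a_t)\big) + \big(U_t(a_t) - f(a_t,\bm{x}_t)\big).
\end{align*}
Since $U_t$ is $\w{H}_{t-1}$-measurable and, under Thompson sampling, $a_t$ and $a_t^{\star}$ have the same conditional law given $\w{H}_{t-1}$, the middle term has zero conditional expectation, so $\mathbb{E}[r_t] = \mathbb{E}[f(a_t^{\star},\bm{x}_t)-U_t(a_t^{\star})] + \mathbb{E}[U_t(a_t)-f(a_t,\bm{x}_t)]$.

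For the two remaining terms I would invoke the concentration already in hand. Lemma \ref{lemma:prob_bound} (a consequence of the sub-Gaussian property of $\bm{c}_t(a)^{\T}(\bs{\beta}_t-\bs{\mu}_t)$ from Lemma \ref{lemma:paper_subgauss}) gives, roundwise, $\Pr(|f(a,\bm{x}_t)-\mu_t^{*}(a,\bm{x}_t)| > \alpha_t^{1/2}\sigma_{\bm{c}_t}(a,\bm{x}_t)) \le 2e^{-\alpha_t/2}$. In the finite-action case, a union bound over $a\in\w{A}$ and $t\ge 1$ with $\alpha_t = 2\log((t^2+1)|\w{A}|)$ makes the total failure probability summable; on the favourable event $f(a_t^{\star},\bm{x}_t)-U_t(a_t^{\star})\le 0$ and $U_t(a_t)-f(a_t,\bm{x}_t)\le 2\alpha_t^{1/2}\sigma_{\bm{c}_t}(a_t,\bm{x}_t)$, while on the rare failure rounds $r_t$ is bounded in expectation by the prior spread of $f$, contributing the constant $C_{\sigma} = \sum_{j=1}^J k_j((a,\bm{x}),(a,\bm{x}))$. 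Hence $\mathbb{E}[R_T] \le C_{\sigma} + 2\alpha_T^{1/2}\,\mathbb{E}\big[\sum_{t=1}^T \sigma_{\bm{c}_t}(a_t,\bm{x}_t)\big]$ by monotonicity of $\alpha_t$. Since $\sigma_{\bm{c}_t}^2$ is literally the posterior variance of the unconditioned GP on $f$, the standard argument of \citet{srinivas2009gaussian} yields $\sum_t \sigma_{\bm{c}_t}^2 \le \tfrac{2}{\log(1+\sigma^{-2})}\gamma_T$, and Cauchy--Schwarz gives $\sum_t \sigma_{\bm{c}_t}(a_t,\bm{x}_t) \le \sqrt{\tfrac{2 T \gamma_T}{\log(1+\sigma^{-2})}}$; substituting and using Lemma \ref{lemma:information_gain} produces the claimed $\w{O}^{*}(\sqrt{T\gamma_T\alpha_T})$ rate with the stated constant. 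For the compact--convex case I would additionally bring in the increasingly fine discretizations $A_t\subset[0,1]$ of size $\tau_t = t^2\zeta\sqrt{\log(2\eta/\delta)}$ from Appendix \ref{sup:ucb_continuous} together with the sample-path gradient tail bound of \citet{ghosal2006posterior} (Proposition \ref{prop:ghosal_roy}) to control $|f(a,\bm{x}_t)-f([a]_t,\bm{x}_t)|\le 1/t^2$ with high probability; rerunning the decomposition with $U_t$ restricted to $A_t$ and evaluated at $[a_t^{\star}]_t$, the discretization slack accumulates to at most $\pi^2/6$, and the stated choice of $\alpha_t$ keeps the union bound over $A_t$ and $t$ summable.

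The main obstacle, and the reason the proposition is not an immediate corollary of \citet{russo2014learning}, is justifying that $U_t$ is a legitimate upper confidence bound even though it pairs the \emph{truncated}-posterior mean $\mu_t^{*}$ with the \emph{unconditioned}-posterior standard deviation $\sigma_{\bm{c}_t}$, while the actual posterior of $f(a,\bm{x}_t)$ under $\cap_{t'}\w{C}_{t'}$ is a linear functional of a truncated multivariate Gaussian with no closed form. This is exactly what Lemmas \ref{lemma:paper_subgauss} and \ref{lemma:prob_bound} supply, so once those are granted the argument is largely bookkeeping; the remaining points requiring care are that the information-gain accounting still applies (it does, because $\sigma_{\bm{c}_t}^2$ depends only on the untruncated GP and is unaffected by conditioning on the concavity events) and that $\mathbb{E}[r_t\mathbbm{1}(\text{failure})]$ can be bounded uniformly in $t$, for which one uses that the prior second moment of $f(a,\bm{x})$ is dominated by $C_{\sigma}$.
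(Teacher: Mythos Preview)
Your overall architecture matches the paper's: the Russo--Van Roy UCB decomposition with the analytical confidence sequence $U_t(a)=\mu_t^{*}(a,\bm{x}_t)+\alpha_t^{1/2}\sigma_{\bm{c}_t}(a,\bm{x}_t)$, sub-Gaussianity of the truncated posterior (Lemma~\ref{lemma:paper_subgauss}), and the Srinivas information-gain accounting on the unconditioned variances. Where your write-up diverges from the paper is in the bookkeeping for the two terms of the decomposition, and one of those divergences is a genuine gap.

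For the term $\mathbb{E}[U_t(a_t)-f(a_t,\bm{x}_t)]$, you bound it on a good event by $2\alpha_t^{1/2}\sigma_{\bm{c}_t}$. The paper observes that no good event is needed here: since under Thompson sampling $a_t$ is independent of $f$ given the history, the tower property gives $\mathbb{E}[f(a_t,\bm{x}_t)\mid H_{t-1},a_t]=\mu_t^{*}(a_t,\bm{x}_t)$ exactly, so this term equals $\mathbb{E}[\alpha_t^{1/2}\sigma_{\bm{c}_t}(a_t,\bm{x}_t)]$ with no factor~$2$ and no failure contribution. For the overshoot term $\mathbb{E}[(f(a_t^{\star},\bm{x}_t)-U_t(a_t^{\star}))_+]$, your plan is to split on a good event and then say ``on failure rounds $r_t$ is bounded in expectation by the prior spread, contributing $C_\sigma$''. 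But after the Russo--Van Roy step the quantity on the bad event is $f(a_t^{\star})-U_t(a_t^{\star})$, which involves the truncated-posterior mean $\mu_t^{*}$ and is not the instantaneous regret $r_t$; bounding it by ``prior spread'' is not immediate, and a Cauchy--Schwarz patch would not deliver the stated constant $C_\sigma$. The paper instead integrates the sub-Gaussian tail directly: for each $a$ it shows $\mathbb{E}[(f(a,\bm{x}_t)-U_t(a))_+\mid H_t]\le \sigma_{\bm{c}_t}(a,\bm{x}_t)\alpha_t^{-1/2}e^{-\alpha_t/2}\le C_\sigma/((t^2+1)|\w{A}|)$ via the $\mathrm{erfc}$ bound, then sums over $a\in\w{A}$ and $t$ to get exactly $C_\sigma$.

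For the continuous case your discretization is the high-probability one from Appendix~\ref{sup:ucb_continuous}, with a free parameter $\delta$; that is appropriate for Proposition~\ref{prop:discrete_ucb} but not for a Bayes-regret statement, which has no $\delta$. The paper instead uses a discretization $|\w{A}_t|=t^2\eta\zeta\sqrt{\pi}$ together with an \emph{expectation} bound $\mathbb{E}|f(a,\bm{x}_t)-f([a]_t,\bm{x}_t)|\le 1/(2t^2)$ (citing \citet{kandasamy2016gaussian}), and a four-term version of the Russo--Van Roy decomposition in which the discretization slack accumulates to $\pi^2/6$. Your sketch gets the right extra $\pi^2/6$ but by a route that would leave a spurious $\delta$ in $\alpha_t$ and in the final bound.
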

$C_{\sigma} = \sum_{j=1}^J k_j\left((a, \bm{x}), (a, \bm{x})\right)$ is simply a bound on the prior variance is is well defined when $k_j(\cdot, \cdot)$ are stationary kernels. 

The use of the elliptical slice sampler for posterior sampling introduces small approximation error that we do not account for in our analysis. We defer study of the error analysis of MCMC to further work. One could apply the methods of \citep{mazumdar20Langevin} to derive bounds from a Langevin sampling Algorithm. However the ESS are custom designed for distributions related to the multivariate normal work very well in practice and theory \citep{murray2010elliptical, natarovskii2021geometric, wu2024fast}.

\subsection{Proof of Proposition \ref{prop:bayes_regret}}

\subsubsection{Discrete Action Set Case}

To prove Bayesian Regret for our Thompson Sampling algorithm we use Proposition 1 from \citet{russo2014learning}:
\begin{proposition}[\citet{russo2014learning}]
    For any Upper Confidence Bound Sequence $\{U_t| t \in \mathbb{N} \}$:
    \begin{align*}
    \mathbb{E}(R_T) = \mathbb{E} \sum_{t=1}^T \left[ U_t(a_t, \bm{x}_t) - f(a_t, \bm{x}_t) \right] + \mathbb{E} \sum_{t=1}^T \left[ f(a_t^{*}, \bm{x}_t) - U_t(a_t^{*}, \bm{x}_t)   \right],
    \end{align*}

tor $t \in \mathbb{N}$. $a_t$ are chosen via Thompson sampling and $a^{*}_t := \mbox{argmax}_{a \in \w{A}} f(a, \bm{x}_t)$
\end{proposition}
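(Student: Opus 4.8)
The plan is to prove this decomposition directly by inserting and cancelling the confidence-bound terms, with the decisive cancellation coming from the defining \emph{probability matching} property of Thompson sampling. Writing $r_t = f(a_t^{*}, \bm{x}_t) - f(a_t, \bm{x}_t)$ for the instantaneous regret, I would first add and subtract $U_t$ evaluated at both the played action $a_t$ and the optimal action $a_t^{*}$ to obtain the purely algebraic identity
\begin{align*}
r_t = \left[ U_t(a_t, \bm{x}_t) - f(a_t, \bm{x}_t)\right] + \left[ f(a_t^{*}, \bm{x}_t) - U_t(a_t^{*}, \bm{x}_t)\right] + \left[ U_t(a_t^{*}, \bm{x}_t) - U_t(a_t, \bm{x}_t)\right].
\end{align*}
Summing over $t$ and taking expectations reproduces the two stated sums, so the whole proposition reduces to showing that the expected sum of the third bracketed term vanishes, i.e. $\mathbb{E}\sum_{t=1}^T [U_t(a_t^{*}, \bm{x}_t) - U_t(a_t, \bm{x}_t)] = 0$.

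To establish this, I would condition on the history $\mathcal{F}_{t-1}$ generated by all past actions, contexts, rewards, and concavity events, together with the freshly revealed context $\bm{x}_t$. The key observation is that any upper confidence bound $U_t(\cdot, \bm{x}_t)$ is, by construction, a deterministic (and hence $\sigma(\mathcal{F}_{t-1}, \bm{x}_t)$-measurable) function: it depends on the unknown $f$ only through posterior summaries such as $\mu_t^{*}$ and $\sigma_{\bm{c}_t}$. Under Thompson sampling, $a_t = \mathrm{argmax}_{a \in \w{A}} f^{*}(a, \bm{x}_t)$ where $f^{*}$ is drawn from the posterior $p(f \mid \bm{y}_{t-1}, \bigcap_{t'=1}^{t} \w{C}_{t'})$; consequently, conditioned on $(\mathcal{F}_{t-1}, \bm{x}_t)$, the played action $a_t$ has exactly the same distribution as the true optimizer $a_t^{*} = \mathrm{argmax}_{a \in \w{A}} f(a, \bm{x}_t)$, since $f^{*}$ and $f$ share the same (concavity-conditioned) posterior law. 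Because $U_t(\cdot, \bm{x}_t)$ is frozen under this conditioning, probability matching gives $\mathbb{E}[U_t(a_t^{*}, \bm{x}_t) \mid \mathcal{F}_{t-1}, \bm{x}_t] = \mathbb{E}[U_t(a_t, \bm{x}_t) \mid \mathcal{F}_{t-1}, \bm{x}_t]$. Applying the tower property term by term and summing yields the claimed cancellation, completing the proof.

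The main obstacle is making the conditional-distribution-matching step fully rigorous in the contextual, concavity-constrained setting. Two points require care. First, the filtration must be set up so that $U_t$ is $\sigma(\mathcal{F}_{t-1}, \bm{x}_t)$-measurable while $a_t^{*}$ and $a_t$ are measurable functions of $f$ and of the posterior draw $f^{*}$ respectively; this is precisely where revealing the context \emph{before} the action is selected, and treating $f$ as a posterior draw conditioned on the events $\bigcap_{t'=1}^t \w{C}_{t'}$, is essential, so that $a_t$ and $a_t^{*}$ genuinely share a conditional law. Second, I must ensure the maximizers exist and are measurably selected: on a finite $\w{A}$ this is immediate, while on a compact convex $\w{A} \subset [0,1]$ it follows from the continuity of the almost surely differentiable GP sample paths guaranteed by Proposition \ref{prop:ghosal_roy}. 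Once these measurability and existence facts are in place, the remainder is the routine telescoping and tower-property argument sketched above.
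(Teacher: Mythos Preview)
Your argument is correct and is exactly the standard proof: add and subtract $U_t(a_t,\bm{x}_t)$ and $U_t(a_t^{*},\bm{x}_t)$, then use that $a_t$ and $a_t^{*}$ are identically distributed given the history so that $\mathbb{E}[U_t(a_t^{*},\bm{x}_t)-U_t(a_t,\bm{x}_t)\mid H_t]=0$. The paper does not supply its own proof of this proposition, citing it directly from \citet{russo2014learning}; however, when the paper proves the continuous-action analogue (Lemma~\ref{lemma:cont_distribution}) it uses precisely your add/subtract plus probability-matching argument, so your proposal matches both the original source and the paper's own treatment of the variant result.
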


Therefore we can use the concentration results we derived for the proof of Proposition \ref{prop:discrete_ucb} to derive a sequence of Upper Confidence Bounds that give us a Bayes Regret bound. We prove that following lemma to bound the last term of the decomposition.

\begin{lemma}
\label{lemma:br_lemma}
Let $U_t(a, \bm{x}) = \mu_t^{*}(a, \bm{x}_t) + \alpha_t^{\frac{1}{2}} \sigma_{\bm{c}_t}(a, \bm{x}_{t}) $. Choose $\alpha_t = 2 \log\left( (t^2 + 1)|\w{A}|+ e \right)$. Then for $\forall a \in \w{A}$:

\begin{align*}
\mathbb{E}\left[ \mathbbm{1}\left(f(a, \bm{x}_t) - U_t(a, \bm{x}_t) \geq 0 \right)  (f(a, \bm{x}_t) - U_t(a, \bm{x}_t))| H_t \right] \leq \frac{1}{c_{a, t}} \frac{C_{\sigma}}{(t^2 + 1)| \w{A}|}.
\end{align*}
\end{lemma}

\begin{proof}
    Letting $\nu_t = \alpha_t^{\frac{1}{2}} \sigma_{\bm{c}_t}(a, \bm{x}_{t}) $ and $c_{a, t} = \pr{f(a, \bm{x}_t) - \mu_t^{*}(a, \bm{x}_t) \geq \nu_t}$

    \begin{align*}
    \mathbb{E}\left[ \mathbbm{1}\left(f(a, \bm{x}_t) - U_t(a, \bm{x}_t) \geq 0 \right)  (f(a, \bm{x}_t) - U_t(a, \bm{x}_t))| H_t \right]
    &\leq \mathbb{E}\left[(f(a, \bm{x}_t) - U_t(a, \bm{x}_t))| H_t, f(a, \bm{x}_t) - U_t(a, \bm{x}_t) \geq 0 \right] \\
    &=  \frac{1}{c_{a, t}}\int_{\nu_t} \pr{f(a, \bm{x}_t) - \mu_{t-1}(a, \bm{x}_t) \geq u} d u \\
    &= \frac{\nu_t}{c_{a, t}} \int_{1} \pr{f(a, \bm{x}_t) - \mu_{t-1}(a, \bm{x}_t) \geq \nu_t u} du \\
    &\leq \frac{\nu_t}{c_{a, t}} \int_{1} \exp\left(-\frac{1}{2} \alpha_t u^2 \right) du \\
    & = \frac{\alpha_t^{\frac{1}{2}} \sigma_{t-1}(a, \bm{x}_t)}{c_{a, t}} \left(\frac{\pi}{2 \alpha_t} \right)^{\frac{1}{2}} \mbox{erfc}\left[  \left( \frac{\alpha_t}{2}^{\frac{1}{2}} \right)\right] \\
    &\leq \frac{\sigma_{t-1}(a, \bm{x}_t)}{ \alpha_t^{\frac{1}{2}}c_{a, t}} \exp\left(-\frac{1}{2} \alpha_t \right) \\
    &\leq \frac{1}{c_{t, 1}}\sigma_{t-1}(a, \bm{x}_t) \frac{1}{(t^2 + 1)| \w{A}| + e} \\ 
    &\leq \frac{1}{c_{a, t}} \frac{C_{\sigma}}{(t^2 + 1)| \w{A}|} .
    \end{align*}

Where we define $C_{\sigma} = \sum_{j=1}^J k_j\left((a, \bm{x}), (a, \bm{x})\right)$ as the bound on the prior variance. This is well defined for stationary kernels. The first inequality follows from the property that for positive random variables  $\mathbb{E}(x) = \int_{0}^{\infty} \pr{x > u} du$. The second inequality follows from the fact the $f(a, \bm{x}_t)$ is sub-gaussian with variance proxy $\sigma_{\bm{c}_t}(a, \bm{x}_{t}) $. The third inequality follows from the known bound for the $\mbox{erfc}$ function: $\mbox{erfc}(x) \leq \frac{1}{x\sqrt{\pi}}\exp(-x^2)$ \citet{zhang2020improved}. The fourth inequality follows from our choice of $\alpha_t > 1, t \geq 1$. 
\end{proof}

Now we can prove proposition \ref{prop:bayes_regret}

\begin{proof}
    First we apply the regret decomposition of Bayesian regret introduced by \citet{russo2014learning}

    \begin{align*}
    BR_T = \mathbb{E} \sum_{t=1}^T \left[ U_t(a_t, \bm{x}) - f(a, \bm{x})  \right] + \mathbb{E} \sum_{t=1}^T \left[ f(a^{*}_t, \bm{x}_t) - U_t(a^{*}, \bm{x}_t) \right],
    \end{align*}

    where $a^{*}_t$ is the optimal action at time $t$ with context $\bm{x}_t$. With our choice of $\alpha_t$ satisfying the condition of lemma $\ref{lemma:br_lemma}$ we can bound the last term:
    \begin{align}
    \label{eqn:discrete_boundfustart}
    \mathbb{E} \sum_{t=1}^T \left[ f(a^{*}, \bm{x}_t) - U_t(a^{*}_t, \bm{x}_t) \right] &\leq \sum_{t=1}^{\infty} \sum_{a \in \w{A}} \mathbb{E}\left[  \mathbb{E}\left[ \mathbbm{1}(f(a, \bm{x}_t) - U_t(a, \bm{x}_t) \geq 0)  (f(a, \bm{x}_t) - U_t(a, \bm{x}_t))| H_t \right] \right] \\
    & \leq \sum_{t=1}^{\infty} \sum_{a \in \w{A}} \mathbb{E}\left[\pr{f(a, \bm{x}_t) - \mu_{t-1}(a, \bm{x}_t) \geq \nu_t} \mathbb{E}\left[(f(a, \bm{x}_t) - U_t(a, \bm{x}_t))| H_t, f(a, \bm{x}_t) - U_t(a, \bm{x}_t) \geq 0 \right]  \right] \notag \\
    &\leq \sum_{t=1}^{\infty} \frac{C_{\sigma}}{t^2 + 1} \leq C_{\sigma} \notag
    \end{align}
Next we bound the first term 
\begin{align}
\label{eqn:discrete_uf_bound}
\mathbb{E} \sum_{t=1}^T \left[ U_t(a_t, \bm{x}_t) - f(a_t, \bm{x}_t) \right] &= \mathbb{E} \sum_{t=1}^T \left[ U(a_t, \bm{x}_t) - \mu_{t-1}(a_t, \bm{x}_t)  \right] = \mathbb{E} \sum_{t=1}^T \alpha_t^{\frac{1}{2}} \sigma_{t-1}(a_t, \bm{x}_t) \\
& \leq \mathbb{E} \left[ \sqrt{T \alpha_T} \sqrt{\sum_{t=1}^T \sigma^2_{\bm{c}_t}(a_t, \bm{x}_t)}  \right] . \notag
\end{align}

The first equality follows from the tower property. The second quantity follows from the definition of the Upper confidence bound. The inequality follows from cauchy-schwarz.  Following the details in  \citet{srinivas2009gaussian} we can show that 
\begin{align*}
\sum_{t=1}^T \sigma^2_{t-1}(a_t) &\leq \frac{ \sum_{t=1}^T \log\left(1 + \sigma^{-2} \sigma^2_{\bm{c}_t}(a_t, \bm{x}_t)\right)}{ \log(1 + \sigma^{-2}) } \\
&= I(\bm{y}_t, \bm{f}_t) \log(1 + \sigma^{-2})^{-1} \\
&\leq \gamma_t \log(1 + \sigma^{-2})^{-1}.
\end{align*}

The first inequality is shown in lemma 5.5 from \citet{srinivas2009gaussian}. The first equality  and last inequality follow from lemma 5.2 in \citet{srinivas2009gaussian}. Combining the results of the bounds gives us the bayesian regret. 
\end{proof}

\subsubsection{Continuous Case}

We assume a kernel covariance function for $k_j(\cdot, \cdot)$ that is 4-times differentiable so that there exist $\eta, \zeta$ such that the bound on the derivatives of the GP sample paths from Proposition \ref{prop:ghosal_roy} hold. Following the approach from section \ref{sup:ucb_continuous} we will define a set of uniform discretizations $\w{A}_t$ on $[0, 1]$. As before, define $[a]_t$ as the point in $\w{A}_t$ closest to point $a$.

We present a modified version the regret decomposition from \citet{russo2014learning} for the continuous case:

\begin{lemma}
    \label{lemma:cont_distribution}
    For any Upper Confidence Bound Sequence $\{U_t| t \in \mathbb{N} \}$:
    \begin{align*}
    \mathbb{E}(R_T) &= \mathbb{E} \sum_{t=1}^T \left[ f(a^{*}_t, \bm{x}_t)  -f( [a^{*}_t]_t, \bm{x}_t) \right]  + \mathbb{E}\sum_{t=1}^T\left[ f([a_t]_t, \bm{x}_t) - f(a, \bm{x}_t) \right] \\
&+ \mathbb{E} \sum_{t=1}^T\left[ f([a_t^{*}]_t, \bm{x}_t) - U_t([a^{*}_t]_t, \bm{x}_t) \right] +  \mathbb{E}\sum_{t=1}^T \left[ U_t([a_t]_t   - f([a_t]_t, \bm{x}_t) \right].
\end{align*}
for $t \in \mathbb{N}$. $a_t$ are chosen via Thompson sampling and $a^{*}_t := \mbox{argmax}_{a \in \w{A}} f(a, \bm{x}_t)$
\end{lemma}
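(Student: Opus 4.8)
The plan is to reproduce the regret decomposition of \citet{russo2014learning} (their Proposition~1, restated above for the discrete case) but to interpose the discretization map $a \mapsto [a]_t$ so that every confidence‑bound term is evaluated only at points of the finite grid $\w{A}_t$. I would begin from the identity $\mathbb{E}(R_T) = \mathbb{E}\sum_{t=1}^T\left[ f(a_t^*, \bm{x}_t) - f(a_t, \bm{x}_t)\right]$, where $a_t^* := \mbox{argmax}_{a\in\w{A}} f(a, \bm{x}_t)$; this maximizer exists almost surely because, under four‑times differentiability of each $k_j$, Proposition~\ref{prop:ghosal_roy} yields a.s.\ differentiable (hence continuous) sample paths on the compact set $\w{A}$.

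Next I would add and subtract the four bridging quantities $f([a_t^*]_t, \bm{x}_t)$, $U_t([a_t^*]_t, \bm{x}_t)$, $U_t([a_t]_t, \bm{x}_t)$, and $f([a_t]_t, \bm{x}_t)$ inside the sum, and regroup so that the right‑hand side is exactly the four telescoping blocks claimed in the lemma plus a single leftover term $\mathbb{E}\sum_{t=1}^T\left[ U_t([a_t^*]_t, \bm{x}_t) - U_t([a_t]_t, \bm{x}_t)\right]$. The crux is showing this leftover vanishes. It follows from the defining property of Thompson sampling: conditionally on the history $H_t$ and the realized context $\bm{x}_t$, the sampled function $f^{*}(\cdot,\bm{x}_t)$ has the same law as the posterior of $f(\cdot,\bm{x}_t)$, so $a_t = \mbox{argmax}_a f^{*}(a,\bm{x}_t)$ and $a_t^* = \mbox{argmax}_a f(a,\bm{x}_t)$ have the same conditional distribution given $H_t$. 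Since the grid $\w{A}_t$ is deterministic and $a \mapsto U_t(a, \bm{x}_t)$ depends only on $H_t$, both $[\cdot]_t$ and $U_t(\cdot,\bm{x}_t)$ are $H_t$‑measurable; hence $\mathbb{E}\left[U_t([a_t^*]_t, \bm{x}_t)\mid H_t\right] = \mathbb{E}\left[U_t([a_t]_t, \bm{x}_t)\mid H_t\right]$, and taking expectations and summing over $t$ removes the leftover term, leaving precisely the stated decomposition.

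To make the term‑by‑term splitting of the expectation rigorous I would either work with the finite partial sums up to $T$, or first note integrability of each summand: the posterior mean and variance entering $U_t$, and $f$ itself, have moments controlled by the stationary (hence bounded) prior covariance $C_\sigma = \sum_{j=1}^J k_j\bigl((a,\bm{x}),(a,\bm{x})\bigr)$ and by the sub‑Gaussianity of Lemma~\ref{lemma:paper_subgauss}, which legitimizes linearity of expectation and the conditioning manipulations. I expect the main obstacle to be exactly this bookkeeping of measurability and conditioning — keeping straight that $a_t^*$ is a functional of $(f,\bm{x}_t)$ while $a_t$ is a functional of $(f^{*},\bm{x}_t)$, that $[\cdot]_t$ and $U_t$ are functions of $H_t$ only, and that equality in conditional distribution given $H_t$ is preserved under composition with these $H_t$‑measurable maps; the remainder is algebraic rearrangement. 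Once the lemma is established, blocks~3 and~4 are handled by Lemma~\ref{lemma:br_lemma} and the information‑gain bound, and blocks~1 and~2 by the Lipschitz/discretization estimate of Section~\ref{sup:ucb_continuous}, as in the discrete proof.
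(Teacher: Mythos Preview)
Your proposal is correct and follows essentially the same route as the paper: add and subtract $f([a_t^*]_t,\bm{x}_t)$, $U_t([a_t^*]_t,\bm{x}_t)$, $U_t([a_t]_t,\bm{x}_t)$, $f([a_t]_t,\bm{x}_t)$, then eliminate the residual $\mathbb{E}\bigl[U_t([a_t^*]_t,\bm{x}_t)-U_t([a_t]_t,\bm{x}_t)\bigr]$ via the posterior-matching property of Thompson sampling, noting that $U_t$ and $[\cdot]_t$ are $H_t$-measurable so equality of the conditional laws of $a_t$ and $a_t^*$ transfers through. Your added remarks on existence of the maximizer and integrability are more careful than the paper's version but do not change the argument.
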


\begin{proof}
\begin{align*}
\mathbb{E}\left[ f(a^{*}_t, \bm{x}_t) - f(a_t, \bm{x}_t) \right] &=  \mathbb{E}\left[ f(a^{*}_t, \bm{x}_t)  -f( [a^{*}_t]_t, \bm{x}_t) \right] +  \mathbb{E}\left[ U_t([a_t^{*}]_t - U_t([a_t]) \right] 
 + \mathbb{E}\left[ f([a_t^{*}]_t, \bm{x}_t) - U_t([a^{*}_t]_t, \bm{x}_t) \right]
\\&+  \mathbb{E}\left[ U_t([a_t]_t   - f([a_t]_t, \bm{x}_t) \right]  + \mathbb{E}\left[ f([a_t]_t, \bm{x}_t) - f(a, \bm{x}_t) \right]  \\
&  = \mathbb{E}\left[ f(a^{*}_t, \bm{x}_t)  -f( [a^{*}_t]_t, \bm{x}_t) \right] + \mathbb{E}\left[ f([a_t^{*}]_t, \bm{x}_t) - U_t([a^{*}_t]_t, \bm{x}_t) \right] + \mathbb{E}\left[ f([a_t]_t, \bm{x}_t) - f(a, \bm{x}_t) \right]\\
&+  \mathbb{E}\left[ U_t([a_t]_t)   - f([a_t]_t, \bm{x}_t) \right].
\end{align*}

The first equality follows from adding and subtracting $f([a_t^{*}], \bm{x}_t),f([a_t]_t, \bm{x}_t), U_t([a_t]_t, \bm{x}_t), U_t([a_t^{*}]_t, \bm{x}_t)$. The second equality follows from the fact that 
\begin{align*}
    \mathbb{E}\left[ U_t([a_t^{*}]_t - U_t([a_t]) \right] = \mathbb{E} \left[ \mathbb{E}\left[ U_t([a_t^{*}]_t - U_t([a_t] | H_t) \right] \right] = 0.
\end{align*}
The inner expectation is zero because under Thompson sampling  the action $a_t$ and the optimal action $a_t^{*}$ have the same distribution conditional on the same observations.The result follows from summing over $t$
\end{proof}

We will choose a discretization of size $|\w{A}_t| = t^2 \eta \zeta \sqrt{\pi}$. Given this discretization  and our assumptions \citet{kandasamy2016gaussian} provide the following bound on the discretization:
\begin{lemma}[\citet{kandasamy2016gaussian}]
    \label{lemma:prob_bound}
    For all and $t \in \mathbb{N}$ and $a \in \w{A}$, $\mathbb{E} \left[ \left|f(a, \bm{x}_t) - f([a]_t, \bm{x}_t) \right| \right] \leq \frac{1}{2t^2}$
\end{lemma}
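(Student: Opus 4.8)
The plan is to trade the discretization error for a bound on the action-derivative of $f$ and then integrate the Gaussian tail supplied by Proposition~\ref{prop:ghosal_roy}. Fix $t\in\mathbb{N}$ and a context $\bm{x}_t$. Because $\w{A}\subseteq[0,1]$ is convex and $\w{A}_t$ is a uniform grid with $|\w{A}_t|=t^2\eta\zeta\sqrt{\pi}$ points, we have $\|a-[a]_t\|_1\le 1/|\w{A}_t|$ for every $a\in\w{A}$, and the segment joining $[a]_t$ to $a$ at the fixed context $\bm{x}_t$ stays in $\w{A}$. On the event $\{\sup_{\bm{v}\in\w{A}\times\w{X}}|\partial f/\partial\bm{v}_j|\le L\}$, where $j$ indexes the action coordinate, the sample paths are differentiable almost surely (Proposition~\ref{prop:ghosal_roy}), so the mean value theorem along that segment gives the \emph{uniform} estimate $\sup_{a\in\w{A}}|f(a,\bm{x}_t)-f([a]_t,\bm{x}_t)|\le L/|\w{A}_t|$; no context-direction derivatives enter since $a$ and $[a]_t$ share the same context. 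I would keep this uniform-in-$a$ form, since the regret decomposition applies it to the random actions $a_t$ and $a_t^{*}$.

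Next I would transfer the derivative tail to a tail for the discretization error: taking $L=u|\w{A}_t|$ in Proposition~\ref{prop:ghosal_roy}, for every $u>0$
\begin{align*}
\pr{\sup_{a\in\w{A}}|f(a,\bm{x}_t)-f([a]_t,\bm{x}_t)|>u}\le \pr{\sup_{\bm{v}}|\partial f/\partial\bm{v}_j|>u|\w{A}_t|}\le \eta\exp\left\{-\left(u|\w{A}_t|/\zeta\right)^2\right\}.
\end{align*}
Since the quantity is nonnegative, the layer-cake identity together with $\int_0^\infty e^{-c^2u^2}\,du=\sqrt{\pi}/(2c)$ gives
\begin{align*}
\mathbb{E}\left[\sup_{a\in\w{A}}|f(a,\bm{x}_t)-f([a]_t,\bm{x}_t)|\right]=\int_0^\infty\pr{\sup_{a\in\w{A}}|f(a,\bm{x}_t)-f([a]_t,\bm{x}_t)|>u}\,du\le\frac{\eta\zeta\sqrt{\pi}}{2|\w{A}_t|}.
\end{align*}
Plugging in $|\w{A}_t|=t^2\eta\zeta\sqrt{\pi}$ yields the claimed $\tfrac{1}{2t^2}$ (a fortiori for each fixed $a$). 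Up to this specialization of the derivative bound, this is the discretization argument of \citet{kandasamy2016gaussian}.

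The step I expect to require the most care is the conditioning implicit in the surrounding Bayesian-regret argument: Proposition~\ref{prop:ghosal_roy} controls the derivatives of draws from the \emph{prior}, whereas the expectations in the regret decomposition are over the posterior/history $H_t$. One should therefore check that the Gaussian-type derivative tail persists after conditioning on finitely many noisy observations and concavity events — or, more simply, apply the unconditional bound pointwise and then integrate over $H_t$ and $\bm{x}_t$, which is legitimate because the final constant $\eta\zeta\sqrt{\pi}/(2|\w{A}_t|)$ does not depend on the realized context or history. Everything else (the grid-spacing estimate, the mean value theorem, the Gaussian integral) is routine.
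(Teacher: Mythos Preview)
Your proposal is correct. The paper does not supply its own proof of this lemma --- it is simply cited from \citet{kandasamy2016gaussian} --- and your argument (mean value theorem to convert the discretization error into a derivative bound, then the layer-cake identity combined with the Gaussian tail from Proposition~\ref{prop:ghosal_roy} and the choice $|\w{A}_t|=t^2\eta\zeta\sqrt{\pi}$) is exactly the standard derivation behind that citation; your remark about conditioning is well-observed but, as you note, harmless since the final bound is deterministic and can be pulled outside the expectation over $H_t$.
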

Using these results we can prove the Bayesian regret bounds for the continuous case. Choose $\alpha_t= 2 \log\left[ (t^2 +1) \w{A}_t\right] + \log \left[t^2 \eta \zeta \sqrt{\pi} \right]$. Applying Lemma \ref{lemma:prob_bound} to the first two terms in Lemma \ref{lemma:cont_distribution} we have 
\begin{align*}
\mathbb{E} \sum_{t=1}^T \left[ f(a^{*}_t, \bm{x}_t)  -f( [a^{*}_t]_t, \bm{x}_t) \right]  + \mathbb{E}\sum_{t=1}^T\left[ f([a_t]_t, \bm{x}_t) - f(a, \bm{x}_t) \right] \leq \frac{\pi^2}{6}
\end{align*}
The bound for the terms $\mathbb{E}\sum_{t=1}^T \left[ U_t([a_t]_t   - f([a_t]_t, \bm{x}_t) \right]$ follows exactly according to derivation starting with Equation \ref{eqn:discrete_uf_bound}. Similarly  $\mathbb{E} \sum_{t=1}^T\left[ f([a_t^{*}]_t, \bm{x}_t) - U_t([a^{*}_t]_t, \bm{x}_t) \right]$ follow exactly from the derivation beginning in Equation \ref{eqn:discrete_boundfustart} except with the  new $\alpha_t$ where $|\w{A}|$ is replaced with $|\w{A}_t|$ and we sum over $a \in \w{A}_t$. The resulting bounds are identical given the choice of $\alpha_t$.

The result follows by combining the components to obtain the bound on Bayesian Regret

\section{Experimental Details}
\label{sup:exp_details}

\subsection{Hyper-Parameter Selection}

For all algorithms we set the noise parameter to the truth across all experiments. For the NN-UCB algorithm we follow  \citep{kassraie2022neural} as set the learning rate for SGD at $\eta = .01$. Note that the use of the SGD optimizer is crucial for the theoretical properties of the algorithm. For NN Thompson \citep{zhang2021neural} we do a grid search on $\nu \in \{1e-1, 1e-2, 1e-3, 1e-4, 1e-5\}$ and report the best performance.  For both NN-UCB and NN-Thompson, we follow both papers and use a 1-Layer neural network. We set the number of nodes to the layer 512. For NN-Bootstrap as in \citep{riquelme2018deep, zhou2020neural, zhang2021neural}, we use simultaneously train 5 neural networks with a probability of data point inclusion $p = .8$.  We train all neural networks for 500 iterations. For CSGP-Thompson, use use the ellipitical slice sampler from \citet{wu2024fast}.

\end{document}